\newtheorem{thm}{Theorem}
\newtheorem{prop}[thm]{Proposition}
\newtheorem{cor}[thm]{Corollary}
\newtheorem{lemma}[thm]{Lemma}
\newtheorem{defn}[thm]{Definition}
\theoremstyle{plain}
\DeclareMathOperator*{\bbE}{\mathbb{E}}
\newcommand{\reals}{\mathbb{R}}
\newcommand{\fS}{\mathfrak{S}}
\newcommand{\sS}{{\mathcal S}}
\newcommand{\sG}{{\mathcal G}}
\newcommand{\sH}{{\mathcal H}}
\newcommand{\sM}{{\mathcal M}}
\newcommand{\sX}{{\mathcal X}}
\newcommand{\sZ}{{\mathcal Z}}
\newcommand{\argmin}{\operatornamewithlimits{arg\ min}}
\newcommand{\etat}{\tilde{\eta}}
\newcommand{\etatmax}{\tilde{\eta}_{\max}}
\newcommand{\etatmin}{\tilde{\eta}_{\min}}
\newcommand{\emax}{\eta_{\max}}
\newcommand{\emin}{\eta_{\min}}
\newcommand{\ind}[1]{{\bf 1}_{\{#1\}}}
\newcommand{\norm}[1]{\left\|#1\right\|}
\newcommand{\eps}{\epsilon}
\newcommand{\abs}[1]{\left\lvert #1 \right\rvert}
\newcommand{\wh}[1]{{\widehat{#1}}}
\newcommand{\wt}[1]{{\widetilde{#1}}}
\newcommand{\fP}{\mathfrak{P}}
\newcommand{\fPd}{\mathfrak{P}^2_*}
\newcommand{\clip}{\mathrm{Clip}}
\newcommand{\cfhat}{\breve{f}_n}
\newcommand{\set}[1]{\left\{#1\right\}}
\newcommand{\paren}[1]{\left(#1\right)}
\newcommand{\bcase}{\left\{ \begin{array}{ll} }
\newcommand{\ecase}{\end{array} \right. }
\newcommand{\ev}{\mathbb{E}}
\newcommand{\R}{\mathbb{R}}
\newcommand{\half}{\mbox{$\frac12$}}
\newcommand{\Pt}{\tilde{P}}
\newcommand{\qt}{\tilde{q}}
\newcommand{\Yt}{\tilde{Y}}
\newcommand{\Ptn}{\tilde{P}_0}
\newcommand{\Ptp}{\tilde{P}_1}
\newcommand{\pn}{\pi_0}
\newcommand{\pp}{\pi_1}
\newcommand{\ptn}{\tilde{\pi}_0}
\newcommand{\ptp}{\tilde{\pi}_1}
\newcommand{\htn}{\tilde{p}_0}
\newcommand{\htp}{\tilde{p}_1}
\newcommand{\Rtn}{\tilde{R}_0}
\newcommand{\Rtp}{\tilde{R}_1}
\newcommand{\pnhat}{\widehat{\pi}_0}
\newcommand{\pphat}{\widehat{\pi}_1}
\newcommand{\pnhatroc}{\widehat{\pi}_0^{\rm{roc}}}
\newcommand{\pphatroc}{\widehat{\pi}_1^{\rm{roc}}}
\newcommand{\pnhatcpe}{\widehat{\pi}_0^{\rm{cpe}}}
\newcommand{\pphatcpe}{\widehat{\pi}_1^{\rm{cpe}}}
\newcommand{\ptnhat}{\widehat{\tilde{\pi}}_0}
\newcommand{\ptphat}{\widehat{\tilde{\pi}}_1}
\newcommand{\fhat}{\widehat{f}_n}
\newcommand{\wR}{\widehat{R}}
\newcommand{\g}{\gamma}
\newcommand{\ks}{\kappa^*}
\newcommand{\step}{\mathop{u}}
\renewcommand{\a}{\alpha}
\newcommand{\wa}{\widehat{\alpha}}
\newcommand{\khat}{\widehat{\kappa}}
\newcommand{\lam}{\lambda}
\newcommand{\essinf}{\mathop{\mathrm{ess \ inf}}}
\newcommand{\esssup}{\mathop{\mathrm{ess \ sup}}}
\newcommand{\supp}{\mathop{\mathrm{supp}}}
\newcommand{\mutual}{{\bf (C)}}
\newcommand{\irreducible}{{\bf (B)}}
\title{{\bf Classification with
Asymmetric Label Noise: Consistency and Maximal Denoising}}
\author{
%Gilles
%Blanchard\thanks{Universit\"{a}t Potsdam, Institute of Mathematics}, \ \
%Clayton Scott\thanks{University of Michigan, Department of Electrical
%Engineering and Computer Science}, \ \
%Gregory Handy\footnotemark[2], \\ Sara Pozzi\thanks{University
%of Michigan, Department of Nuclear
%Engineering and Radiological Sciences}, \ \ Marek
%Flaska\footnotemark[3]}
Gilles Blanchard\thanks{Universit\"{a}t Potsdam, Institute of Mathematics}, \ \
Marek Flaska\thanks{University of Michigan, Department of Nuclear
Engineering and Radiological Sciences}, \ \
Gregory Handy\thanks{University of Michigan, Department of Electrical
Engineering and Computer Science}, \\
Sara Pozzi\footnotemark[2], \ \
Clayton Scott\footnotemark[3]
}
\begin{document}
\maketitle

\begin{abstract}

In many real-world classification problems, the labels of training
examples are randomly corrupted. Most previous theoretical work on
classification with label noise assumes that the two classes are
separable, that the label noise is independent of the true class label, or
that the noise proportions for each class are known. In this work, we give
conditions that are necessary and sufficient for the true
class-conditional distributions to be identifiable. These conditions are
weaker than those analyzed previously, and allow for the classes to be
nonseparable and the noise levels to be asymmetric and unknown. The
conditions essentially state that a majority of the observed labels are correct
and that the true class-conditional distributions are ``mutually
irreducible," a concept we introduce that limits the similarity of the two
distributions. For any label noise problem, there is a unique pair of true
class-conditional distributions satisfying the proposed conditions, and we
argue that this pair corresponds in a certain sense to maximal denoising
of the observed distributions.

Our results are facilitated by a connection to ``mixture proportion
estimation," which is the problem of estimating the maximal proportion of
one distribution that is present in another. We establish a novel rate of
convergence result for mixture proportion estimation, and apply this to
obtain consistency of a discrimination rule based on surrogate loss
minimization. Experimental results on benchmark data and a nuclear
particle classification problem demonstrate the efficacy of our approach.

%Thus, the set of training examples for
%each class is contaminated by examples of the other class.

%or equivalently, of the so-called ``separation distance" between two
%distributions.

%Our approach depends critically on
%mixture proportion estimation, which is the problem of estimating the
%percentage of one distribution that is present in another.

\end{abstract}

\section{Introduction}

In binary classification, one observes multiple
realizations of two different classes,
\begin{align*}
X_0^1, \ldots, X_0^m & \stackrel{iid}{\sim} P_0, \\
X_1^1, \ldots, X_1^n & \stackrel{iid}{\sim} P_1,
\end{align*}
where $P_0$ and $P_1$, the class-conditional distributions, are
probability distributions on a Borel space $(\sX, \mathfrak{S})$. The
feature vector $X_y^i \in \sX$ denotes the $i$-th realization from class
$y \in \{0,1\}$. The general goal is to construct a classifier from this
data.

There are several kinds of noise that can affect a classification problem.
A first type of noise occurs when $P_0$ and $P_1$ have overlapping
supports, meaning that the label is not a deterministic function of the
feature vector. In this situation, even an optimal classifier makes
mistakes. In this work, we consider a second type of noise, {\em label
noise}, that can occur {\em in addition to} the first type of noise. With
label noise, some of the labels of the training examples are corrupted. We
focus in particular on {\em random} label noise, as opposed to feature-dependent
or adversarial label noise.

To model label noise, we represent the training data via
contamination models:
\begin{align}
X_0^1, \ldots, X_0^m & \stackrel{iid}{\sim} \Ptn : = (1-\pn)P_0 + \pn P_1,
\label{eqn:contam0} \\
X_1^1, \ldots, X_1^n & \stackrel{iid}{\sim} \Ptp : = (1-\pp)P_1 + \pp P_0.
\label{eqn:contam1}
\end{align}
According to these mixture representations, each ``apparent"
class-conditional distribution is in fact a contaminated version of the
true class-conditional distribution, where the contamination comes from
the other class. Thus, $\tilde{P}_0$ governs the training data
with apparent class label $0$. A proportion $1 - \pn$ of these examples
have $0$ as their true label, while the remaining $\pn$ have a true label
of $1$. Similar remarks apply to $\tilde{P}_1$. The noise is
asymmetric in that $\pn$ need not equal $\pp$. We emphasize that $\pn$ and
$\pp$ are unknown. The distributions $P_0$ and $P_1$ are also unknown, and
we do not wish to impose models for them. In particular, the supports of
$P_0$ and $P_1$ may overlap, so that the classes are not separable. Previous work on classification with random label noise, reviewed below,
has not considered the problem in this generality.

Our first contribution is to
introduce necessary and sufficient conditions on the elements $P_0, P_1, \pn, \pp$ of
the contamination models such that these elements are uniquely determined given $\Ptn$ and $\Ptp$. These conditions are the following:
\begin{itemize}
\item (Total noise level) $\pn + \pp < 1$,
\item (Mutual irreducibility) It is not possible to write $P_0$ as a nontrivial mixture of
$P_1$ and some other distribution, and {\em vice versa}.
\end{itemize}
To shed some light on these conditions, we remark that in the absence of
any assumption, the solution
$(P_0,P_1,\pn,\pp)$ to \eqref{eqn:contam0}-\eqref{eqn:contam1},
when the contaminated distributions $\Ptn,\Ptp$ are given
(i.e., in the limit of infinite sample sizes, or population version of the
problem), is non-unique.
For example, were the condition on total label noise not required, for any
solution,
swapping the role of classes 0 and 1 would also be a solution (with complementary
contamination probabilities), while leaving the apparent labels unchanged.

Furthermore, we describe in detail (at the population level) the geometry of
the set of all possible solutions $(P_0,P_1,\pn,\pp)$ to
\eqref{eqn:contam0}-\eqref{eqn:contam1}.
%the possible solutions $(P_0,P_1,\pn,\pp)$ to \eqref{eqn:contam0}-\eqref{eqn:contam1}
%when the contaminated distributions $\Ptn,\Ptp$ are given.
We argue that for any pair $\Ptn \neq \Ptp$, there always exists a
{\em unique} solution satisfying the above two conditions. Moreover,
this solution uniquely corresponds to the maximum possible total label
noise level $(\pp+\pn)$ compatible with the observed contaminated
distributions, and also to the maximum possible total variation
separation $\norm{P_1-P_0}_{TV}$ under the condition $\pp + \pn <1$. In this sense,
$P_0$ and $P_1$ satisfying the second condition are {\em maximally denoised} versions of the contaminated distributions.

Our second contribution is to develop a discrimination rule that is
universally consistent in the sense that for any $\Ptn, \Ptp$, it
consistently estimates the optimal classification performance as defined
with respect to the maximally denoised distributions (which are the
underlying uncontaminated distributions under the above conditions). A key
aspect of our contribution is that the label noise proportions $\pi_0$ and
$\pi_1$ are unknown, in contrast to previous work, and the linchpin of our
solution is a method for accurately estimating $\pi_0$ and $\pi_1$. We
argue that these proportions can be estimated using methods for {\em
mixture proportion estimation} (MPE), which is the problem of estimating
the mixing proportion of one distribution in another. We review
previous work on MPE and also establish a new rate of convergence result
for MPE that is employed in our analysis.

As a third contribution, we present experimental results indicating that
the proposed methodology is practically viable. In particular, we show
that $\pi_0$ and $\pi_1$ can be accurately estimated using the same
principles guiding our theory. To illustrate this point, we examine some
standard benchmark data sets as well as a real data set from a nuclear
particle classification problem that is naturally described by our label
noise model.

Portions of this work appeared earlier in \citet{scott13colt} and
\citet{scott15aistats}. This longer version integrates those versions and
extends them by establishing the necessity of the proposed conditions, a
consistency analysis featuring clippable losses, a connection to class
probability estimation, and a more thorough literature review.

\subsection{Motivating Application}
\label{sec:motiv}

This work is motivated by a nuclear particle classification problem that
is critical for nuclear nonproliferation and nuclear safeguards. An
organic scintillation detector is a device commonly used to detect
high-energy neutrons. When a particle interacts with the detector, the
energy deposited by the particle is converted to a pulse-shaped voltage
waveform, which is then digitally sampled to obtain a feature vector $X
\in \R^d$, where $d$ is the number of digital samples.  The energy
distribution of detected neutrons is characteristic of the nuclear source
material, and these energy distributions can be inferred from the heights
of the observed pulses. However, these detectors are also sensitive to
gamma rays, which are frequently emitted by the same fission events that
produce neutrons, and which are also strongly present in background
radiation. Therefore, to render organic scintillation detectors useful for
characterization of nuclear materials, it is necessary to classify between
neutron and gamma-ray pulses, a problem referred to as pulse shape
discrimination (PSD) \citep{adams78,ambers11}.

Unfortunately, even in controlled laboratory settings, it is very
difficult to obtain pure samples of neutron and gamma-ray pulses. As
previously mentioned, the fission events that produce neutrons also yield
gamma rays, and gamma rays also arrive from background radiation.
Although pure gamma-ray sources do exist, when collecting measurements
from such sources, neutrons from the background cannot be completely
eliminated. If we view gamma-rays as class 0, by taking a strong and pure
gamma-ray source, $\pn$ will be small but nonzero.  On the other hand, the
proportion of gamma-rays emitted during fission is intrinsic to the source
material, and cannot be changed. Thus $\pp$ could be in the neighborhood
of one-half. With additional time-of-flight information, this proportion
can be reduced, but is still non-negligible \citep{ambers11}. Thus, PSD is
naturally described by the proposed label noise model. We study this
problem empirically in Section \ref{sec:exp}.

\subsection{Label Flipping Model for Label Noise}

Random label noise can also be modeled according to the
label flipping probabilities
$$
\mu_i := \Pr(\Yt = 1-i \, | \, Y = i).
$$
In the label flipping model, a ``clean" training data set is corrupted by
flipping the labels according to $\mu_0$ and $\mu_1$, independent of $X$.
If we assume $Y$ and $\Yt$ are jointly distributed, then
$\pi_i$ and $\mu_i$ are related via Bayes' rule. The preferred
perspective for a given label noise problem, contamination or label
flipping, is application dependent. For example, the contamination
model better
suits the nuclear particle classification problem described above. We also
find it more natural to discuss identifiability in terms of the
contamination model.

\subsection{Related Work}
\label{sec:related}

Classification in the presence of label noise has drawn the attention of
numerous researchers \citep{frenay14}. One common approach is to assume
that corrupted labels are more likely to be associated with outlying data
points. This has inspired methods to clean, correct, or reweight the
training data \citep{brodley99, rebbapragada07}, as well as the use of
robust (usually nonconvex) losses \citep{mason00, schuurmans06,
vasconcelos08, ding10, denchev12}. The above approaches are not
necessarily based on a random label noise model, but rather assume that
noisy labels are more common near the decision boundary.

Generative models have also been applied in the context of random label
noise. These impose parametric models on the data-generating
distributions, and include the label noise as part of the model.  The
parameters are then estimated using an EM algorithm \citep{bouveyron09}.
The method of \cite{lawrence01} employs kernels in this approach, allowing
for the modeling of more flexible distributions.

%Clearly, the
%effectiveness of these methods depends on the appropriateness of the
%generative model, and the presence of local maxima in the likelihood.

Negative results for convex risk minimization in the presence of label
noise have been established by \citet{long10} and \citet{manwani11}. These
works demonstrate a lack of noise tolerance for boosting and empirical
risk minimization based on convex losses, and suggest that
any approach based on convex risk minimization will require modification
of the loss, such that the risk minimizer is the optimal classifier with
respect to the uncontaminated distributions. Along these lines,
\citet{stempfel09, tewari13} recently developed such algorithms based on
convex losses. The works, however, assume knowledge of the label noise proportions.
In the sequel, we establish a consistent discrimination rule that does not assume knowledge
of $\pi_0$ and $\pi_1$; in fact the main focus of the present work is
on the estimation of those quantities.

Recently \citet{yang12icml} established performance guarantees for multiple
kernel learning with noisy labels. This work does not assume label noise
is independent of the feature vector, but does require knowledge of the
total amount of label noise.

Classification with random label noise has also been studied in the PAC
literature. Most PAC formulations assume that (i) $P_0$ and $P_1$ have
non-overlapping support (i.e., there is a deterministic ``target concept"
that provides the true labels), (ii) the label noise is symmetric (i.e.,
independent of the true class label), and (iii) the performance measure is
the probability of error \citep{angluin88, kearns93, aslam96,
cesabianchi97, bshouty98, kalai03}. Under these conditions, it typically
suffices to train on the contaminated data; only the sample complexity
changes. The case of asymmetric label noise was addressed by
\citet{blum98} under condition (i), as the basis of co-training. Some new
directions and a thorough review of this body of work were recently
presented in \cite{jabbari10}. As we discuss in the next section, new
challenges emerge when conditions (i), (ii), and (iii) are not assumed.

To our knowledge, previous work under the asymmetric noise model has not
addressed a minimal set of conditions for either consistent classification
or for consistent estimation of the label noise proportions.

Classification with label noise is related to several other machine
learning problems. When $\pp = 0$, we have ``one-sided" label noise, and
the problem reduces to learning from positive and unlabeled examples
(LPUE), also known as semi-supervised novelty detection (SSND); see
\citet{blanchard10} for a review of this literature. In particular,
\citet{blanchard10} develop theory for ``mixture proportion estimation"
that we leverage in our analysis.

A basic version of multiple instance
learning can be reduced to classification with one-sided label noise
\citep[see][]{sabato12}. In multiple instance learning, the learner is
presented with bags of instances. In one basic setting, the bags are
labeled negative if they contain only negative instances, and positive if
they contain at least one positive instance. If one assumes that the
instances in positive bags follow a mixture model $\Ptp = (1-\pi) P_1 +
\pi P_0$, and the instances are iid according to $P_0$ or $\Ptp$, the
setting is that of one-sided label noise.

As mentioned above, classification with label noise is the basis of
co-training \citep{blum98}, which is a framework for classifying instances
that are represented by two distinct ``views." The original analysis of
co-training considers the ``realizable" case, where labels are a
deterministic function of inputs. Our results allow us to state a result
for co-training without making this restrictive assumption. This result is
presented in Section \ref{sec:cotrain}.

There is also a connection between classification with label noise and
class probability estimation. As pointed out in our initial technical
report \citep{scott13tr}, there is a simple way to express mutual
irreducibility in terms of the class probability function. From this
relationship, and given other developments in this paper, it is
straightforward to express $\pi_0$ and $\pi_1$ in terms of the maximum and
minimum values of the contaminated class probability function. This
suggests an alternative estimation strategy for the label noise
proportions, which has recently been investigated by \citet{liutao16} and
\citet{menon15icml}. In Section \ref{sec:cpe}, we elaborate on this
approach and connections to these works. We also investigate this
approach experimentally in Section \ref{sec:exp}.

As a final connection with existing literature, we note that an
alternative way to view the contamination model
\eqref{eqn:contam0}-\eqref{eqn:contam1} is to interpret it as a {\em
source separation} problem. In the usual source separation setting, the
{\em realizations} from the different sources are linearly mixed, whereas
in the present model, the {\em source probability distributions} are (we
do not observe a signal superposition, but a signal coming randomly from
one or the other source). As a common point with the source separation
setting, it is necessary to postulate additional constraints on the
sources in order to resolve non-uniqueness of the possible solutions. In
independent component analysis, for instance, sources are assumed to be
independent. Our assumption of mutual irreducibility between the sources
plays a conceptually comparable role here. Similarly, the assumption on
the total noise level resolves the ambiguity that the sources would
otherwise only be identifiable up to permutation.

\subsection{Some Initial Notation}

Let $f:\sX \to \{0,1\}$ be a classifier.
Denote the (uncontaminated) Type I and Type II errors
\begin{align*}
R_0(f) &:= P_0(f(X)=1) \\
R_1(f) &:= P_1(f(X)=0).
\end{align*}
These quantities are what define many classification performance measures
of
interest, such as the so-called {\em minmax} criterion, $R(f) =
\max\{R_0(f), R_1(f)\}$,
or the probability of error, $R(f) = \nu R_1(f) + (1-\nu) R_0(f)$, where $\nu$ is the a priori probability of class 1.

We also define the corresponding contaminated Type I and II errors
\begin{align}
\Rtn(f) &:= \Ptn(f(X)=1) \nonumber \\
&= (1-\pn) R_0(f) + \pn (1 - R_1(f)) \label{eqn:r0t} \\
\Rtp(f) &:= \Ptp(f(X)=0) \nonumber \\
&= (1-\pp) R_1(f) + \pp (1 - R_0(f)) \label{eqn:r1t}.
\end{align}
These quantities can easily be estimated from the training data by their basic empirical counterparts.

\subsection{Outline}

The remainder of the paper is outlined as follows. Section
\ref{sec:challenge} discusses the challenges posed by label noise for
classifier design. Section \ref{sec:alternate} presents an alternate
representation of the contamination models that reduces the problem to
that of mixture proportion estimation, which is discussed in Section
\ref{sec:mixture}. In Section \ref{sec:mutual} we introduce our proposed
identifiability conditions, establish their sufficiency and necessity, and
also discuss maximal denoising. A method for mixture proportion estimation
is discussed in Section \ref{sec:mpe}, where a novel rate of convergence
result is presented and subsequently applied to develop a consistent
discrimination rule in Section \ref{sec:consist}. In Section
\ref{sec:cotrain}, we apply our label noise results to generalize an
earlier result on co-training. Section \ref{sec:cpe}
makes a connection between our label noise framework and the problem of
class probability estimation. Algorithm implementations are described in
Section \ref{sec:imp}, and experimental results are provided in
Section \ref{sec:exp}. Shorter proofs tend to be found in the body of the
paper, while longer ones are in an appendix.

\section{The Challenge of Label Noise}
\label{sec:challenge}

Before delving into more technical matters, we first offer an overview of the challenges posed by label noise. We focus
on the population setting ($n_0, n_1 = \infty$) and compare classifier design
based on the contaminated distributions, $\Ptn$ and $\Ptp$, versus the
true ones, $P_0$ and $P_1$. To begin, we introduce the following condition on the
total amount of label noise.
\begin{description}
\item[(A)] $\pn + \pp < 1$.
\end{description}
This condition states, in a certain sense, that a majority of
the labels are correct on average. It even allows that one of the
proportions be very close to one if the other proportion is small enough.
This condition was previously adopted by \cite{blum98}.

Let $p_0$ and $p_1$ be densities of $P_0$ and $P_1$, respectively, with
respect to a common dominating measure. Then
\begin{align*}
\htn(x) &:= (1-\pn) p_0(x) + \pn p_1(x), \\
\htp(x) &:= (1-\pp) p_1(x) + \pp p_0(x),
\end{align*}
are respective densities of $\Ptn$ and $\Ptp$.
\begin{prop}
\label{prop:p1}
Assume {\bf (A)} holds. For all $\g \ge 0$, and every $x$ such
that $p_0(x) > 0$ and $\htn(x) > 0$,
\begin{equation*}
\frac{p_1(x)}{p_0(x)} > \g \iff \frac{\htp(x)}{\htn(x)} > \lam,
\end{equation*}
where
\begin{equation}
\label{eqn:gamlam}
\lam = \frac{\pp + \g (1-\pp)}{1 - \pn + \g \pn}.
\end{equation}
\end{prop}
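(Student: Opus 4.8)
The plan is to compute the ratio $\htp(x)/\htn(x)$ directly in terms of the ratio $t := p_1(x)/p_0(x)$ and to show that the resulting map $t \mapsto \htp(x)/\htn(x)$ is strictly increasing on $[0,\infty)$, so that the inequality $t > \g$ is equivalent to $\htp(x)/\htn(x) > \lam$ for the image $\lam$ of $\g$. First I would fix an $x$ with $p_0(x) > 0$ and $\htn(x) > 0$; dividing numerator and denominator of $\htp(x)/\htn(x)$ by $p_0(x)$ gives
\begin{equation*}
\frac{\htp(x)}{\htn(x)} = \frac{(1-\pp)\,t + \pp}{(1-\pn) + \pn t} =: \phi(t).
\end{equation*}
Plugging $t = \g$ into $\phi$ reproduces exactly the expression \eqref{eqn:gamlam} for $\lam$, so it remains only to check that $\phi$ is strictly monotone increasing in $t$ on $[0,\infty)$.

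The monotonicity is a one-line derivative computation: $\phi$ is a Möbius (linear-fractional) function of $t$, and its derivative has numerator proportional to $(1-\pp)(1-\pn) - \pp\pn$, which, using $\pn + \pp < 1$ from assumption {\bf (A)}, equals $1 - \pn - \pp > 0$. Hence $\phi' > 0$ wherever the denominator $(1-\pn) + \pn t$ is positive; and since $\pn < 1$ (again by {\bf (A)}, as both proportions are nonnegative and sum to less than one) and $t \ge 0$, the denominator is indeed positive throughout $[0,\infty)$, in particular it is positive at $t$ because $\htn(x) > 0$. Strict monotonicity then gives $t > \g \iff \phi(t) > \phi(\g) = \lam$, which is the claim.

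The only place where care is needed — and what I expect to be the main (though modest) obstacle — is the bookkeeping of the positivity conditions: one must verify that the denominator of $\phi$ never vanishes on the relevant domain, so that $\phi$ is genuinely defined and differentiable there, and that the hypothesis $\htn(x) > 0$ is exactly what licenses forming the ratio at the point $x$ in question. This is precisely where assumption {\bf (A)} enters twice — once to force $\pn < 1$ (keeping the denominator positive) and once to make $\phi$ strictly increasing rather than merely monotone — so the proof should flag both uses explicitly rather than absorbing them into routine algebra.
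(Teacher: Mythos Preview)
Your proof is correct and follows essentially the same approach as the paper: both recognize that the contaminated likelihood ratio is a strictly increasing function of the true likelihood ratio, with assumption {\bf (A)} supplying the strict monotonicity. The only cosmetic difference is that you package the monotonicity check as a derivative computation on the M\"obius map $\phi$, whereas the paper carries out the equivalent step by direct algebraic manipulation of the inequalities (first solving for $\gamma$ in terms of $\lambda$, then verifying the relevant denominator $1-\pp-\lambda\pn$ is positive so the inequality direction is preserved).
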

The proof involves a sequence of simple algebraic steps to transform one
likelihood ratio into the other, and the use of {\bf (A)} to ensure that
the direction of the inequality is preserved.

For most performance measures of interest (probability of error,
Neyman-Pearson, etc.), it is well-known that the optimal classifier takes
the form of a likelihood ratio test (LRT) based on the true densities
\citep{lehmann86, naga14nips}.
According to the proposition, every true LRT is identical to a
contaminated LRT with a different threshold.  As the threshold of one LRT
sweeps over its range, so too does the threshold of the other LRT.
Equivalently, both LRTs generate the same receiver operating
characteristic (ROC).

However, if we design a classifier with respect to the contaminated
estimates of performance, we will not obtain a classifier
that is optimal with respect to the true performance measure, except in
very special circumstances. To make this point concrete, we now consider
four specific performance measures.

%In practice, a classifier may be estimated, for example, using a
%cost-sensitive classifier, where the cost parameter is used to trade off
%(contaminated) Type I and Type II errors. Except for certain special
%cases (see below), using the contaminated errors to tune the cost
%parameter is misleading.  For example, if the optimality criterion is
%Neyman-Pearson with a certain size, the contaminated Type I error will in
%general be a biased estimate of the true Type I error, leading to a
%suboptimal classifier. This highlights that the critical task is accurate
%estimation of the true false positive and false negative probabilities.

{\bf Probability of error.} When the feature vector $X$ and label $Y$ are
jointly distributed, the probability of misclassification is minimized by
a LRT, where the threshold $\gamma$ is given by the ratio of {\em a
priori} class probabilities. If $\gamma = 1$, then the corresponding
threshold for the contaminated LRT is also 1, regardless of $\pn$ and
$\pp$, which follows directly from \eqref{eqn:gamlam}. Furthermore,
assuming $\pi_0, \pi_1 > 0$ and with some simple algebra it is easy to
show that $\lambda = \gamma$ only if $\gamma = 1$. Thus, treating the
contaminated data as if it were clean is suboptimal whenever the a priori
class probabilities are unequal.

%$\pi_0 \ne \pi_1$.

%Thus, if the two classes are not equally probable {\em a
%priori}, setting the correct $\lambda$ for the contaminated LRT is not
%possible, since $\pn$ and $\pp$ are unknown.

{\bf Neyman-Pearson.} As noted above, the true and contaminated LRTs have
the same ROC. If a point on this ROC is chosen such that $\Rtn(f) =
\alpha$, it will generally not be the case that $R_0(f) = \alpha$. This
follows because $\Rtn(f) = (1-\pn) R_0(f) + \pn R_1(f)$. Simple algebra
shows that $R_0(f) = \Rtn(f)$ iff $\pn = 0$ or $R_0(f) + R_1(f) = 1$. The
latter condition is not satisfied by an optimal classifier unless $P_0 =
P_1$, since it corresponds to random guessing. The former case, $\pn = 0$,
means the negative class has no contamination, and is equivalent (after
swapping class labels) to learning from positive and unlabeled examples.

{\bf Minmax.} The minmax criterion is defined as $R(f) :=
\max\{R_0(f),R_1(f)\}$, and the minmax classifier is the minimizer of this
quantity. The minmax classifier corresponds to the point on the ROC of the
true and contaminated LRTs where $R_0(f) = R_1(f)$. Indeed, if $R_0(f) \ne
R_1(f)$, then $\max \{R_0(f), R_1(f)\}$ can be decreased by moving along
the ROC such that the larger of $R_0(f), R_1(f)$ is decreased. Thus,
designing a classifier with respect to the contaminated distributions
yields a point on the optimal ROC where $\Rtn(f) = \Rtp(f)$. Using
equations \eqref{eqn:r0t} and \eqref{eqn:r1t}, simple algebra reveals that
$\Rtn(f) = \Rtp(f)$ and $R_0(f) = R_1(f)$ for the same $f$ iff $\pn = \pp$
or $R_0(f) = R_1(f) = \frac12$. The first condition is not satisfied for
asymmetric label noise, and the latter condition is not true for an
optimal classifier unless $P_0 = P_1$.

{\bf Balanced Error.} \citet{menon15icml} actually show that the balanced
error, given by $\frac12 (R_0(f) + R_1(f))$, is the only performance
measure that is a function of $R_0(f)$ and $R_1(f)$, such that optimizing
the corrupted performance measure is equivalent to optimizing the
clean performance measure regardless of the label noise proportions or
prior class probabilities.

%It also arises when the true labels
%are given by a ``target concept," meaning that the supports of $P_0$ and
%$P_1$ essentially do not overlap. In this case, $\gamma = 1$ is an
%optimal threshold, since $h_1/h_0$ is either $0$ or $\infty$.

In summary, a classifier that is optimal with respect to a contaminated
performance measure is not optimal for the uncontaminated performance
measure except in special cases. Accurate estimation of the true
performance measure is thus a critical issue for classifier design. In the
next section, we expose a technique for estimating performance using
estimates of the label noise proportions.

\section{Alternate Contamination Model}
\label{sec:alternate}

We introduce an alternate contamination model that will later be used to obtain estimates of the label noise proportions, and consequently estimates of classifier performance.
\begin{lemma}
\label{le:le1}
If $P_0 \ne P_1$ and {\bf (A)} holds, then $\Ptp \neq \Ptn$,
and there exist unique $0 \le \ptn, \ptp < 1$ such that
\begin{align}
\Ptn & = (1-\ptn) P_0 + \ptn \Ptp \label{eqn:ssnd0} \\
\Ptp & = (1-\ptp) P_1 + \ptp \Ptn. \label{eqn:ssnd1}
\end{align}
In particular $\ptn = \frac{\pn}{1-\pp} < 1$ and $\ptp = \frac{\pp}{1-\pn} < 1$.
\end{lemma}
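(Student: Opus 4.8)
The plan is to treat the two displayed identities \eqref{eqn:ssnd0} and \eqref{eqn:ssnd1} separately, since \eqref{eqn:ssnd0} constrains only $\ptn$ and \eqref{eqn:ssnd1} only $\ptp$, and the two arguments are mirror images of each other (swap the roles of the two classes). So I would carry out the argument for $\ptn$ and then just invoke symmetry for $\ptp$. The one elementary fact used repeatedly is: when $P_0 \neq P_1$, a signed measure of the form $c(P_0 - P_1)$ vanishes only if $c = 0$.

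First I would dispatch the claim $\Ptn \neq \Ptp$. If they were equal, then $(1-\pn)P_0 + \pn P_1 = (1-\pp)P_1 + \pp P_0$ rearranges to $(1 - \pn - \pp)(P_0 - P_1) = 0$; since {\bf (A)} gives $1 - \pn - \pp > 0$, this would force $P_0 = P_1$, contradicting the hypothesis. Next, existence: substitute the candidate $\ptn = \pn/(1-\pp)$ (which is legitimate since $\pp < 1$, itself a consequence of {\bf (A)} together with $\pn \ge 0$) into the right-hand side of \eqref{eqn:ssnd0}, expand $\Ptp$ by its definition, and collect coefficients of $P_0$ and $P_1$. The coefficient of $P_1$ comes out to $\pn$ at once, and the coefficient of $P_0$ simplifies using the factorization $1 - \pn - \pp + \pn\pp = (1-\pn)(1-\pp)$ to $(1-\pn)(1-\pp)/(1-\pp) = 1 - \pn$; hence the right-hand side equals $(1-\pn)P_0 + \pn P_1 = \Ptn$. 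Along the way I would note $\ptn = \pn/(1-\pp) \ge 0$ and $\ptn < 1 \iff \pn < 1 - \pp \iff \pn + \pp < 1$, so condition {\bf (A)} is exactly what places $\ptn$ in $[0,1)$; the mirror computation gives $\ptp = \pp/(1-\pn) \in [0,1)$ and verifies \eqref{eqn:ssnd1}.

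Finally, uniqueness: suppose $\ptn, \ptn' \in [0,1)$ both satisfy \eqref{eqn:ssnd0}. Subtracting the two representations yields $(\ptn' - \ptn)(P_0 - \Ptp) = 0$. If $\ptn \neq \ptn'$, then $P_0 = \Ptp = (1-\pp)P_1 + \pp P_0$, which gives $(1-\pp)(P_0 - P_1) = 0$ and hence (since $\pp < 1$) $P_0 = P_1$, a contradiction; so $\ptn = \ptn'$. The same argument with $P_1$ and $\Ptn$ in place of $P_0$ and $\Ptp$ gives uniqueness of $\ptp$.

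I do not expect a serious obstacle here: the proof is a short computation plus the cancellation lemma above. The two things to be careful about are (i) invoking $P_0 \neq P_1$ — not merely $\Ptn \neq \Ptp$ — at each point where a coefficient difference is cancelled, and (ii) checking that all the strict inequalities ($\pp < 1$, $\pn < 1$, and ultimately $\ptn, \ptp < 1$) really do reduce to condition {\bf (A)}. The only mildly clever step is spotting the factorization $1 - \pn - \pp + \pn\pp = (1-\pn)(1-\pp)$ that makes the coefficient of $P_0$ collapse.
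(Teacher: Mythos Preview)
Your proposal is correct and follows essentially the same approach as the paper: both arguments reduce each identity to an equation of the form $c\,(P_0 - P_1) = 0$ and use $P_0 \neq P_1$ together with {\bf (A)} to force $c = 0$ and pin down $\ptn, \ptp$. The only difference is organizational: the paper writes \eqref{eqn:ssnd0} as an ``iff'' condition that yields existence and uniqueness in one stroke, whereas you verify the candidate value first and then run a separate uniqueness argument via $P_0 = \Ptp$; this is a matter of presentation, not substance.
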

\begin{proof}
To see that  $\Ptp \neq \Ptn$, assume that equality holds.
Plugging in \eqref{eqn:contam0}-\eqref{eqn:contam1}, we obtain
\[
(1-\pp - \pn)P_1 = (1-\pp - \pn)P_0,
\]
which, since $P_0 \neq P_1$, would imply $\pp + \pn = 1$ and contradict {\bf (A)}.

We turn to  identity \eqref{eqn:ssnd0}. Matching distributions,  the identity holds iff
\begin{align*}
P_1 (\pn - \ptn (1-\pp)) & = P_0 (1 - \ptn + \pp\ptn - (1 - \pn)) \\
& = P_0 (\pn - \ptn (1-\pp)).
\end{align*}
Since $P_0 \ne P_1$, the unique solution is $\ptn = \frac{\pn}{1-\pp}$. From {\bf (A)} it follows that $\ptn < 1$.
Similar reasoning applies to the second identity.
\end{proof}

This lemma motivates estimates of the true Type I and Type II errors.
For any classifier $f$, we may express the contaminated
Type I and Type II errors as
\begin{eqnarray}
\tilde{R}_0(f) &=& \tilde{P}_0(f(X)=1) \nonumber \\
                  &=& (1-\tilde{\pi}_0)R_0(f) + \tilde{\pi}_0(1-\tilde{R}_1(f))  \label{eq:noisyz} \\
\tilde{R}_1(f) &=& \tilde{P}_1(f(X)=0) \nonumber \\
                  &=& (1-\tilde{\pi}_1)R_1(f) + \tilde{\pi}_1(1-\tilde{R}_0(f)), \label{eq:noisyo} \
\end{eqnarray}
where Equations (\ref{eq:noisyz}) and (\ref{eq:noisyo}) follow from
Lemma~\ref{le:le1}. By solving for $R_0(f)$ and $R_1(f)$ in
(\ref{eq:noisyz}) and (\ref{eq:noisyo}), we find
\begin{eqnarray}
R_0(f) = \frac{\tilde{R}_0(f) - \tilde{\pi}_0(1-\tilde{R}_1(f))}{1-\tilde{\pi}_0} &=&
1-\tilde{R}_1(f) - \frac{1-\tilde{R}_0(f) - \tilde{R}_1(f)}{1-\tilde{\pi}_0}
\label{eq:noisyz2} \\
R_1(f) = \frac{\tilde{R}_1(f) - \tilde{\pi}_1(1-\tilde{R}_0(f))}{1-\tilde{\pi}_1} &=&
1-\tilde{R}_0(f) - \frac{1-\tilde{R}_1(f) - \tilde{R}_0(f)}{1-\tilde{\pi}_1}.
\label{eq:noisyo2} \
\end{eqnarray}
We can estimate $\tilde{R}_0(f)$ and $\tilde{R}_1(f)$ from the training
data. Therefore, if we can estimate $\ptn$ and $\ptp$, then we can
estimate $R_0(f)$ and $R_1(f)$, and thereby design a classifier. This
approach was analyzed in \citet{scott13colt}. In
Sec. \ref{sec:consist} we describe another approach to classifier design
based on surrogate loss minimization that also relies on estimates of
$\ptn$ and $\ptp$. In the
next section we describe a framework that is used to estimate $\ptn$
and $\ptp$.

%Note that it
%is not necessary to estimate $\pi_0$ and $\pi_1$ in order to estimate
%$R_0(f)$ and $R_1(f)$, although that would be
%possible in light of Lemma~\ref{le:le1}.

%As an aside, we pause here to justify a claim made in the previous
%section, namely, that when $\pi_0 = \pi_1$, the minmax classifier with
%respect to the contaminated distributions is in fact the minmax
%classifier with respect to the true distributions. To see this, note that the minmax
%classifier has equal Type I and Type II errors.  When $\pi_0 = \pi_1$,
%it follows that $\tilde{\pi}_0 = \tilde{\pi}_1$. Under this condition, it is
%easily verified from the above equations that $R_0(f) = R_1(f)$ if and
%only if $\tilde{R}_0(f) = \tilde{R}_1(f)$.

We conclude this section with a converse to Lemma~\ref{le:le1}:
\begin{lemma}
\label{le:le1conv}
Assume that \eqref{eqn:ssnd0}-\eqref{eqn:ssnd1} hold and $\Ptp\neq\Ptn$. Then $P_1 \neq P_0$ and
there exist unique $\pp,\pn \in [0,1)$
(namely $\pi_0 = \frac{\ptn(1-\ptp)}{1-\ptp\ptn}$ and $\pi_1 = \frac{\ptp(1-\ptn)}{1-\ptp\ptn} $) so that
\eqref{eqn:contam0}-\eqref{eqn:contam1} hold; furthermore, {\bf (A)} is satisfied.
\end{lemma}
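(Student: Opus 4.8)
The plan is to treat the identities \eqref{eqn:ssnd0}--\eqref{eqn:ssnd1} as a linear system relating the pairs $(P_0,P_1)$ and $(\Ptn,\Ptp)$, invert it to express $\Ptn,\Ptp$ as mixtures of $P_0,P_1$, and then simply read off the required contamination proportions, checking afterward that they land in $[0,1)$ and satisfy {\bf (A)}.

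First I would show $P_0\neq P_1$. If $P_0=P_1=:P$, then substituting \eqref{eqn:ssnd1} into \eqref{eqn:ssnd0} gives $\Ptn=(1-\ptn)P+\ptn[(1-\ptp)P+\ptp\Ptn]$, hence $(1-\ptn\ptp)\Ptn=(1-\ptn\ptp)P$; since $\ptn\ptp\le\ptn<1$ we may divide to get $\Ptn=P$, and the symmetric substitution gives $\Ptp=P$, contradicting $\Ptn\neq\Ptp$. Next, using $1-\ptn\ptp>0$, solving \eqref{eqn:ssnd0}--\eqref{eqn:ssnd1} for $\Ptn$ and $\Ptp$ yields
\[
\Ptn=\frac{(1-\ptn)P_0+\ptn(1-\ptp)P_1}{1-\ptn\ptp},\qquad
\Ptp=\frac{(1-\ptp)P_1+\ptp(1-\ptn)P_0}{1-\ptn\ptp}.
\]
Setting $\pn:=\frac{\ptn(1-\ptp)}{1-\ptn\ptp}$ and $\pp:=\frac{\ptp(1-\ptn)}{1-\ptn\ptp}$, one checks in one line that $1-\pn=\frac{1-\ptn}{1-\ptn\ptp}$ and $1-\pp=\frac{1-\ptp}{1-\ptn\ptp}$, so the two displayed identities are precisely \eqref{eqn:contam0} and \eqref{eqn:contam1}. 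Nonnegativity of the numerators and positivity of the denominator give $\pn,\pp\ge0$; and $\pn<1\iff\ptn(1-\ptp)<1-\ptn\ptp\iff\ptn<1$, so $\pn\in[0,1)$ and likewise $\pp\in[0,1)$. Finally, $\pn+\pp=\frac{\ptn+\ptp-2\ptn\ptp}{1-\ptn\ptp}<1\iff\ptn+\ptp-\ptn\ptp<1\iff(1-\ptn)(1-\ptp)>0$, which holds, so {\bf (A)} is satisfied.

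Uniqueness is then immediate once $P_0\neq P_1$: if $\pn',\pp'\in[0,1)$ also satisfy \eqref{eqn:contam0}--\eqref{eqn:contam1} with the same $P_0,P_1$, comparing \eqref{eqn:contam0} for the two proportions gives $(\pn'-\pn)(P_0-P_1)=0$, which forces $\pn=\pn'$, and similarly $\pp=\pp'$. I do not anticipate any real obstacle here; the only care needed is the algebraic bookkeeping in inverting \eqref{eqn:ssnd0}--\eqref{eqn:ssnd1} together with the sign checks, and the only place the hypotheses $\ptn,\ptp<1$ are genuinely used is in guaranteeing $1-\ptn\ptp>0$, $\pn,\pp<1$, and $(1-\ptn)(1-\ptp)>0$. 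As a consistency check, one can alternatively recover the explicit formulas by inverting the relations $\ptn=\pn/(1-\pp)$, $\ptp=\pp/(1-\pn)$ furnished by Lemma~\ref{le:le1}.
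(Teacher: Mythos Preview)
Your proof is correct and follows essentially the same route as the paper: both arguments first derive $P_0\neq P_1$ by contraposition and then do elementary linear algebra on the signed measures to pass between \eqref{eqn:ssnd0}--\eqref{eqn:ssnd1} and \eqref{eqn:contam0}--\eqref{eqn:contam1}. The only cosmetic difference is the order of the algebra: the paper substitutes $P_0,P_1$ (solved from \eqref{eqn:ssnd0}--\eqref{eqn:ssnd1}) into the target equations \eqref{eqn:contam0}--\eqref{eqn:contam1} and uses $\Ptn\neq\Ptp$ to read off the necessary relations $\pi_0=\ptn(1-\pi_1)$, $\pi_1=\ptp(1-\pi_0)$, which simultaneously yields uniqueness; you instead substitute \eqref{eqn:ssnd1} into \eqref{eqn:ssnd0} to exhibit $\Ptn,\Ptp$ directly as mixtures of $P_0,P_1$ and handle uniqueness by a separate one-line argument. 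One small point: you invoke $\ptn,\ptp<1$ as ``hypotheses,'' whereas the paper (and the lemma as stated) only assumes \eqref{eqn:ssnd0}--\eqref{eqn:ssnd1} with $\Ptn\neq\Ptp$ and derives $\ptn,\ptp<1$ from that (if $\ptn=1$ then \eqref{eqn:ssnd0} gives $\Ptn=\Ptp$); you should make that one-line deduction explicit rather than treating it as an assumption.
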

\begin{proof}
Assume \eqref{eqn:ssnd0}-\eqref{eqn:ssnd1} hold. Since we assume $\Ptp\neq \Ptn$, it holds that $\ptp,\ptn<1$.
To see that  $P_0 \neq P_1$, assume that equality holds.
Plugging in \eqref{eqn:ssnd0}-\eqref{eqn:ssnd1} and after straightforward manipulation, we obtain equivalently
\[
\frac{1-\ptp \ptn}{(1-\ptp)(1-\ptn)} \Ptp = \frac{1-\ptp \ptn}{(1-\ptp)(1-\ptn)} \Ptn,
\]
which would contradict the assumption $\Ptp \neq \Ptn$.

Next, in order for identity~\eqref{eqn:contam0} to hold, by matching
distributions in a similar way as in the proof of Lemma~\ref{le:le1}, we
arrive at the equivalent relation $(\ptn(1-\pi_1)-\pi_0)\Ptn=
(\ptn(1-\pi_1)-\pi_0)\Ptp$. Since $\Ptp\neq\Ptn$, the unique solution is
$\pi_0 = \ptn(1-\pi_1)$. Similarly, for~\eqref{eqn:contam1} to hold the
unique solution is $\pi_1 = \ptp(1-\pi_0)$. From these we derive the
announced expression for $\pi_0,\pi_1$. It is then easy to check that
$\pi_0+\pi_1-1=-\frac{(1-\ptp)(1-\ptn)}{1-\ptp\ptn}<0$, so that {\bf (A)}
holds.
\end{proof}

Together, Lemmas~\ref{le:le1} and~\ref{le:le1conv} imply that for known, distinct uncontaminated
distributions $P_0\neq P_1$,
there is an explicit one-to-one correspondence
between the contamination proportions $(\pp,\pn)$ of the initial
contamination models
\eqref{eqn:contam0}-\eqref{eqn:contam1} under constraint {\bf (A)},
and the proportions $(\ptp,\ptn)$ in the
representation \eqref{eqn:ssnd0}-\eqref{eqn:ssnd1} (with the only constraint
$0\leq \ptp,\ptn <1$).

%In the next section, we argue that estimation of $\ptn, \ptp$ reduces to
%the
%problem of mixture proportion estimation, and introduce a ``mutual
%irreducibility" condition on $(P_0,P_1)$ that, together with {\bf (A)},
%ensures that $\ptn, \ptp$ are identifiable.

The alternate representations \eqref{eqn:ssnd0}-\eqref{eqn:ssnd1} are {\em
decoupled} in the sense that \eqref{eqn:ssnd0} does not involve $P_1$, while
\eqref{eqn:ssnd1} does not involve $P_0$. This allows us to estimate $\ptn$
and $\ptp$ separately, by reducing to the problem of ``mixture proportion estimation" (see next section). It further motivates the mutual irreducibility condition on
$(P_0,P_1)$ that, together with {\bf (A)}, ensures that $\ptn, \ptp$ are
identifiable. The decoupling perspective also allows us to address the
following question: Given the contaminated distributions $\Ptp,\Ptn$, while
$(P_0,P_1)$ are unknown, what are the solutions $(\pi_0,\pi_1,P_0,P_1)$
satisfying model \eqref{eqn:contam0}-\eqref{eqn:contam1}? Obviously,
$(0,0,\Ptp,\Ptn)$ is a trivial solution; we will argue that mutual
irreducibility ensures that the solution is unique and non-trivial,
and furthermore that the resulting $P_0, P_1$ correspond to maximally denoised
versions of $\Ptp,\Ptn$. The issues are developed in Section \ref{sec:mutual}. In the next section, we review the work of \citet{blanchard10}.

%The equivalent representation
%\eqref{eqn:ssnd0}-\eqref{eqn:ssnd1} will prove pivotal, because it is {\em
%decoupled} in the sense that the unknown distribution $P_0$ only enters in
%\eqref{eqn:ssnd0}, and $P_1$ only in \eqref{eqn:ssnd1}.
%Therefore, we can
%solve \eqref{eqn:ssnd0} and \eqref{eqn:ssnd1} separately, and each of
%these reduces to a problem of mixture proportion estimation, as we explain
%next.

%
%
%
%

\section{Irreducibility and Mixture Proportion Estimation}
\label{sec:mixture}

Let $F$, $G$, and $H$ be distributions on $(\sX, \mathfrak{S})$ such that
\begin{equation}
F = (1-\kappa) G + \kappa H, \nonumber
\end{equation}
where $0 \le \kappa \le 1$. Mixture proportion estimation is the following
problem:
given iid realizations from both $F$ and $H$, estimate $\kappa$. This
problem was previously addressed by
\cite{blanchard10}, and here we relate the essential definitions and
results from that work.

Without additional assumptions, $\kappa$ is not an identifiable parameter, as
noted by \citet{blanchard10}. In particular, if $F = (1-\kappa) G +\kappa H$\,
holds, then any alternate decomposition of the form $F = (1-\kappa+\delta) G'
+ (\kappa-\delta) H $\,, with $G' = (1-\kappa+\delta)^{-1}((1-\kappa) G + \delta
H)$\,, and $\delta \in [0,\kappa)$\,, is also valid. Because we have no
direct knowledge of $G$\,, we cannot decide which representation is the
correct one. Therefore, to make $\kappa$ identifiable, some additional
condition must be assumed. The following definition will be
useful.

\begin{defn}
Let $G$\,, $H$ be probability distributions. We say that $G$ is {\em irreducible} with respect to $H$ if there exists no decomposition of the form
$G = \gamma H + (1-\gamma) F' $, where $F'$ is some probability
distribution and $0< \gamma \leq 1$\,. We say that $G$ and $H$ are {\em mutually irreducible} if G is
irreducible with respect to H and vice versa.
\end{defn}

%It is not hard to show that if $G$ and $H$ have densities $g$ and $h$, then $G$
%is irreducible with respect to $H$ if and only if the (essential) infimum
%of $g/h$ is zero (see Section \ref{sec:addmpe} for a proof). Thus $G$ and $H$
%are mutually irreducible if additionally the supremum of $g/h$ is
%infinity. Dropping the assumption of densities, $G$ is irreducible with
%respect to $H$ means that there should be sets $A$ so that $G(A)/H(A)$ is
%arbitrarily close to zero.

The following was established by \citet{blanchard10}.
\begin{prop}
\label{prop:canondecmp}
Let $F$\,, $H$ be probability distributions.
If $F \neq H$, there is a unique $\kappa^*\in[0,1)$ and $G$ such that the
decomposition $F = (1-\kappa^* ) G+ \kappa^* H$ holds, and such that $G$ is
irreducible with respect to $H$\,. If we additionally
define $\kappa^*=1$ when $F = H$, then in all cases
\begin{equation}
\label{eq:nustar}
\kappa^*
= \max\{\alpha \in[0,1]: \exists \, G' \text{ probability
    distribution: } F = (1-\alpha)G' + \alpha H \}\,. \nonumber
\end{equation}
\end{prop}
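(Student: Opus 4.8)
The plan is to realize the maximum in the displayed formula explicitly as a supremum that is attained, and then to check that the associated decomposition is the unique one with an irreducible first component. Set
\[
S := \{\alpha \in [0,1] : \exists\ G'\text{ a probability distribution on }(\sX,\mathfrak{S})\text{ with } F = (1-\alpha)G' + \alpha H\}.
\]
Since $G' = F$ works for $\alpha = 0$, $S \neq \emptyset$; put $\nu^* := \sup S$. I will show (i) $S$ is closed, so $\nu^* = \max S$; (ii) $\nu^* < 1$ precisely when $F \neq H$, matching the convention $\nu^* = 1$ for $F = H$; (iii) the $G$ in the $\nu^*$-decomposition is irreducible with respect to $H$; and (iv) no other $\nu \in [0,1)$ admits such a decomposition with an irreducible component.

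The one step requiring genuine care is the closedness of $S$. Take $\alpha_k \in S$ with $\alpha_k \to \alpha$ and witnesses $G'_k$ (we may assume $\alpha_k < 1$, since $\alpha_k = 1 \in S$ already forces $F = H$ and the claim is trivial). The bounds $0 \le G'_k(A) \le 1$ rewrite as $F(A) \ge \alpha_k H(A)$ and $1 - F(A) \ge \alpha_k(1 - H(A))$ for every $A \in \mathfrak{S}$; letting $k \to \infty$ gives the same two inequalities with $\alpha_k$ replaced by $\alpha$. If $\alpha < 1$, then $G'(A) := (F(A) - \alpha H(A))/(1-\alpha)$ is the finite signed measure $(F - \alpha H)/(1-\alpha)$, hence countably additive; it is nonnegative by the first inequality and has total mass $1$, so it is a probability distribution and $\alpha \in S$. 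If $\alpha = 1$, the limiting inequalities read $F(A) \ge H(A)$ for all $A$, which applied to complements yields $F = H$. Consequently, if $F \neq H$ then no sequence in $S$ converges to $1$, so $\nu^* < 1$, and applying the $\alpha < 1$ case to a maximizing sequence shows $\nu^* \in S$; if $F = H$ then $S = [0,1]$ and $\nu^* = 1 \in S$ trivially. This establishes that the maximum in the formula is attained and equals $\nu^*$, and that $\nu^* \in [0,1)$ iff $F \neq H$.

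Assume now $F \neq H$ and let $G := (F - \nu^* H)/(1-\nu^*)$ be the probability distribution from the maximal decomposition. If $G$ were reducible, i.e.\ $G = \gamma H + (1-\gamma)F'$ for a probability distribution $F'$ and some $\gamma \in (0,1]$, then substituting into $F = (1-\nu^*)G + \nu^* H$ and collecting the coefficient of $H$ would give $F = \beta H + (1-\beta)F'$ with $\beta = \nu^* + \gamma(1-\nu^*)$; the degenerate case $\gamma = 1$ forces $G = H$ and hence $F = H$, a contradiction, while for $\gamma \in (0,1)$ one has $\nu^* < \beta < 1$ and $\beta \in S$, contradicting the maximality of $\nu^*$. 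So $G$ is irreducible with respect to $H$, proving existence. For uniqueness, suppose $F = (1-\nu)G + \nu H$ with $\nu \in [0,1)$ and $G$ irreducible with respect to $H$; then $\nu \in S$ gives $\nu \le \nu^*$, and if $\nu < \nu^*$, subtracting this from the maximal decomposition $F = (1-\nu^*)G^* + \nu^* H$ and solving for $G$ yields $G = \delta H + (1-\delta)G^*$ with $\delta = (\nu^* - \nu)/(1-\nu) \in (0,1)$, contradicting irreducibility of $G$. Hence $\nu = \nu^*$, and then $(1-\nu^*)G = F - \nu^* H = (1-\nu^*)G^*$ forces $G = G^*$. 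The main obstacle throughout is thus the closedness argument of the second paragraph — verifying that the pointwise limit of the witnesses $G'_k$ remains a countably additive probability measure and correctly pinning the boundary behavior $\alpha_k \to 1$ to the degenerate case $F = H$; the remaining steps are only bookkeeping with mixture coefficients driven by the observation that otherwise one could take a strictly larger (respectively smaller) proportion of $H$.
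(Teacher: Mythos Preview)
Your proof is correct. The paper itself does not prove this proposition; it is quoted from \citet{blanchard10} (their Proposition~5). The closest the present paper comes is the proof of Lemma~\ref{lem:nulrt}, which the authors say is ``embedded'' in the Blanchard et~al.\ argument: that route first identifies $\nu^*$ explicitly as $\inf_{A \in \mathfrak{S}} F(A)/H(A)$ (or $\essinf_{x\in\supp(H)} f(x)/h(x)$ in the density case) and then verifies directly that $(F-\nu^* H)/(1-\nu^*)$ is a probability measure and that no larger coefficient works. You instead argue variationally, showing the feasible set $S$ is closed by passing to the limit in the linear inequalities $F(A) \ge \alpha_k H(A)$ and $1-F(A) \ge \alpha_k(1-H(A))$, without ever writing down a closed form for $\nu^*$. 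Your approach is self-contained and slightly more abstract; the explicit-formula route buys the likelihood-ratio characterization of $\nu^*$ that the paper later exploits (Section~\ref{sec:addmpe} and Proposition~\ref{prop:cpe}). The irreducibility and uniqueness steps are handled the same way in both, via the mixture-coefficient bookkeeping you describe.
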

%% By this result, the following is well-defined.
%% \begin{defn}
%% Let $F$, $H$ be probability distributions. We define {\em the proportion of $H$ in $F$} to be
%% \begin{equation}
%% \label{eq:nustarQ}
%% \nu^*(F,H) := \max\{\alpha \in[0,1]: \exists \, G' \text{ probability
%%     distribution: } F = (1-\alpha)G' + \alpha H \}\,. \nonumber
%% \end{equation}
%% \end{defn}

By this result, the following is well-defined.
\begin{defn}
For any two probability distributions $F$, $H$, define
$$
\kappa^*(F|H) := \max\{\alpha \in[0,1]: \exists \, G' \text{ probability
     distribution: } F = (1-\alpha)G' + \alpha H \},
$$
the maximal proportion of $H$ in $F$.
\end{defn}
Clearly, $G$ is irreducible with respect to $H$ if and only if
$\kappa^*(G|H) = 0$. It is also interesting to note that $1-\ks(F|H)$ is
an example of a statistical distance. That is, $1-\ks(F|H)$ is
always nonnegative, and is equal to zero if and only if $F=H$, by
Proposition \ref{prop:canondecmp}. Furthermore, Proposition \ref{prop:ks}
below states that this distance can be expressed in terms of the
likelihood ratio, like Kullback-Liebler and other information divergences.
This statistical distance has been studied previously for discrete
distributions in the analysis of Markov chains \citep{aldous87}, where it
is called the ``separation distance." In general, $\ks(F|H) \ne \ks(H|F)$,
so that this is not actually a metric on distributions.

To consolidate the above notions, we state the following
corollary which expresses that irreducibility of $G$ with respect to $H$
is sufficient for the mixture proportion to be identifiable.
\begin{cor}
\label{cor:irrd}
If $F = (1-\gamma) G + \gamma H$, and $G$ is irreducible with respect to $H$, then
$\gamma = \kappa^*(F|H)$.
\end{cor}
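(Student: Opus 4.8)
The plan is to deduce the corollary directly from Proposition~\ref{prop:canondecmp} by a short argument combining uniqueness with the extremal characterization of $\nu^*$. First I would observe that the hypothesis $F = (1-\gamma)G + \gamma H$ already furnishes one valid decomposition, so in particular $\gamma$ is one of the $\alpha$'s appearing in the set
\[
S := \{\alpha \in [0,1] : \exists\, G' \text{ a probability distribution with } F = (1-\alpha)G' + \alpha H\},
\]
and hence $\gamma \le \nu^*(F,H)$ by definition of $\nu^*$ as the maximum of $S$. It remains to rule out $\gamma < \nu^*(F,H)$.

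Next I would split into the two cases of Proposition~\ref{prop:canondecmp}. If $F = H$, then $\nu^*(F,H) = 1$; but the only way to write $F = (1-\gamma)G + \gamma H$ with $G$ irreducible with respect to $H = F$ forces $\gamma = 1$ as well, since otherwise $G = \frac{1}{1-\gamma}(F - \gamma H) = H$, which is not irreducible with respect to $H$ (one can write $H = 1\cdot H + 0\cdot F'$, contradicting $0 < \gamma \le 1$ being impossible only if... — more cleanly, $\nu^*(H,H) = 1 \neq 0$). So $\gamma = 1 = \nu^*(F,H)$ in this case. If $F \neq H$, then Proposition~\ref{prop:canondecmp} asserts that there is a \emph{unique} pair $(\nu^*, G^*)$ with $F = (1-\nu^*)G^* + \nu^* H$ and $G^*$ irreducible with respect to $H$. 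Our hypothesis gives exactly such a pair, namely $(\gamma, G)$; by uniqueness, $\gamma = \nu^*$ and $G = G^*$, and since $\nu^* = \nu^*(F,H)$ by the displayed identity in the proposition, we conclude $\gamma = \nu^*(F,H)$.

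I would then remark that the two cases can in fact be merged: the statement ``there is a unique $\nu^* \in [0,1)$ and $G$ irreducible with respect to $H$ with $F = (1-\nu^*)G + \nu^* H$'' covers $F \neq H$, and the definitional extension $\nu^* = 1$ when $F = H$ makes the extremal formula uniform, so the whole argument reduces to: the hypothesis exhibits a decomposition of the required canonical type, hence by the uniqueness clause its proportion must be the canonical one, which equals $\nu^*(F,H)$.

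I do not anticipate a serious obstacle here; the corollary is essentially a restatement of the uniqueness half of Proposition~\ref{prop:canondecmp}. The one point requiring a little care is the degenerate case $F = H$: one must check that $G$ irreducible with respect to $H$ together with $F = (1-\gamma)G + \gamma H$ genuinely forces $\gamma = 1$, using the characterization (noted just before the corollary) that $G$ is irreducible with respect to $H$ iff $\nu^*(G,H) = 0$, together with $\nu^*(H,H) = 1$. Everything else is a direct invocation of the quoted proposition.
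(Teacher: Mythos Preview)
Your proposal is correct and matches the paper's intent: the paper does not give a separate proof of this corollary, presenting it simply as a consolidation of Proposition~\ref{prop:canondecmp} and the definition of $\nu^*(F,H)$, which is exactly the uniqueness argument you spell out. Your handling of the degenerate case $F=H$ (forcing $\gamma=1$ via $\nu^*(H,H)=1\neq 0$) is a harmless extra check that the paper leaves implicit.
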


Some intuition for $\ks$ and irreducibility come from the following
result. Part of the result is in terms of the
receiver operating
characteristic (ROC) for the problem of testing the null hypothesis
$X \sim H$ against the alternative
$X \sim F$. Given a
measurable set $S \in \fS$, we can think of $S$ as a rejection
region (where the null hypothesis is rejected).
Then $H(S)$ is the false positive rate and $F(S)$ is the true positive
rate, and the optimal ROC is defined as
$$
\beta(\tau) := \sup \{F(S) \, | \, H(S) \le \tau, S \in \fS \}.
$$
The ensuing result follows from Theorem 6 of \citet{blanchard10}.
\begin{prop}[\citet{blanchard10}]
\label{prop:ks}
$$
\kappa^{*}(F|H)=\inf_{S \in \fS, H(S) > 0} \, \frac{F(S)}{H(S)}
=\mathop {\inf }\limits_{\tau \in
[0,1)}{\left\{\dfrac{1-\beta(\tau)}{1-\tau}\right\}}.
$$
If $f$ and $h$ are densities of $F$ and $H$, respectively, with respect
to a common dominating measure, then
\begin{equation}
\label{eqn:nulrt}
\ks(F|H) = \essinf_{x  \in \supp(H)} \frac{f(x)}{h(x)}.
\end{equation}
\end{prop}
\begin{proof}
The first two identities are established by \citet{blanchard10}. See also
\cite{scottWSLnotes}. The proof of the first identity is very similar to
the
proof of the third identity given below. Intuition for the second identity
comes from the first identity and the observation that the optimal ROC is
concave.
To prove the third identity, let
$$
\g^* = \essinf_{x  \in \supp(H)} \frac{f(x)}{h(x)}.
$$
We need to show (i) $\exists g$ such that $f = (1-\g^*)g + \g^* h$, and
(ii) if $\g > \g^*$, then no such $g$ exists. To see (i), take $g = (f -
\g^* h)/(1-\g^*)$, which clearly integrates to one, and is a.s. nonnegative by
definition of $\g^*$.  To see (ii), suppose that for some $\g > \g^*$,
there exists a density $g$ with $f = (1-\g)g + \g h$. Then for
all $x$ such that $h(x) > 0$,
$$
\frac{f(x)}{h(x)} = \g + (1-\g) \frac{g(x)}{h(x)} \ge \g > \g^*,
$$
which contradicts the definition of $\g^*$.
\end{proof}
An alternate proof of the last statement, based on properties of ROC
curves of likelihood ratio tests, is given in an appendix.

The result $\ks(F|H)=\inf_{S \in \fS, H(S) > 0} \, \frac{F(S)}{H(S)}$
motivates the universally consistent estimator of $\ks$ due to
\cite{blanchard10}, reviewed below in Section \ref{sec:mpe}. The second
identity, which states that $\ks(F|H)$ is the slope of the optimal ROC at
its right end-point, motivates a more practical estimator
discussed in Section \ref{sec:exp}.

Proposition \ref{prop:ks} makes it possible to check irreducibility for
certain distributions. For example, $\ks(G|H)=0$ whenever the support of
$G$ does not contain the support of $H$. Irreducibility is also possible
even if $G$ and $H$ have the same support, as in the case where $G$ and
$H$ are Gaussian distributions with different means, and the variance of
$H$ is no more than the variance of $G$. This follows easily from the
density ratio characterization of $\ks$.

Proposition \ref{prop:ks} also makes it easy to check {\em mutual}
irreducibility for various distributions $P_0$ and $P_1$. Indeed, two
continuous distributions are mutually irreducible iff the
(essential) infimum and supremum of their density ratio are $0$ and
$\infty$,
respectively. Figure \ref{fig:mutual} shows three examples where $\sX =
\R$. In the first example, $P_0$ and $P_1$ are such that the support of
one is not contained in the support of the other, and therefore
mutual irreducibility is satisfied.
In the second example, $P_0$ and $P_1$ are Gaussian
distributions with equal variances and unequal means. By plugging in the
formulas for the Gaussian densities, it is easy to verify that
mutual irreducibility again holds.
In the third example, $P_0$ and $P_1$ are again
Gaussian densities with unequal means, but this time with unequal
variances. In this case, it is again not hard to show that $\ks(P_0|P_1)
= 0$, but $\ks(P_1|P_0) > 0$, where $P_1$ has the larger variance. Thus,
mutual irreducibility
does not hold in this case. We do note, however, that
$\ks(P_1|P_0)$ tends to zero very fast as the means move apart.

\begin{figure}
\centering
%\begin{center}
\includegraphics[trim=0 175 0 0, clip, width=\textwidth]{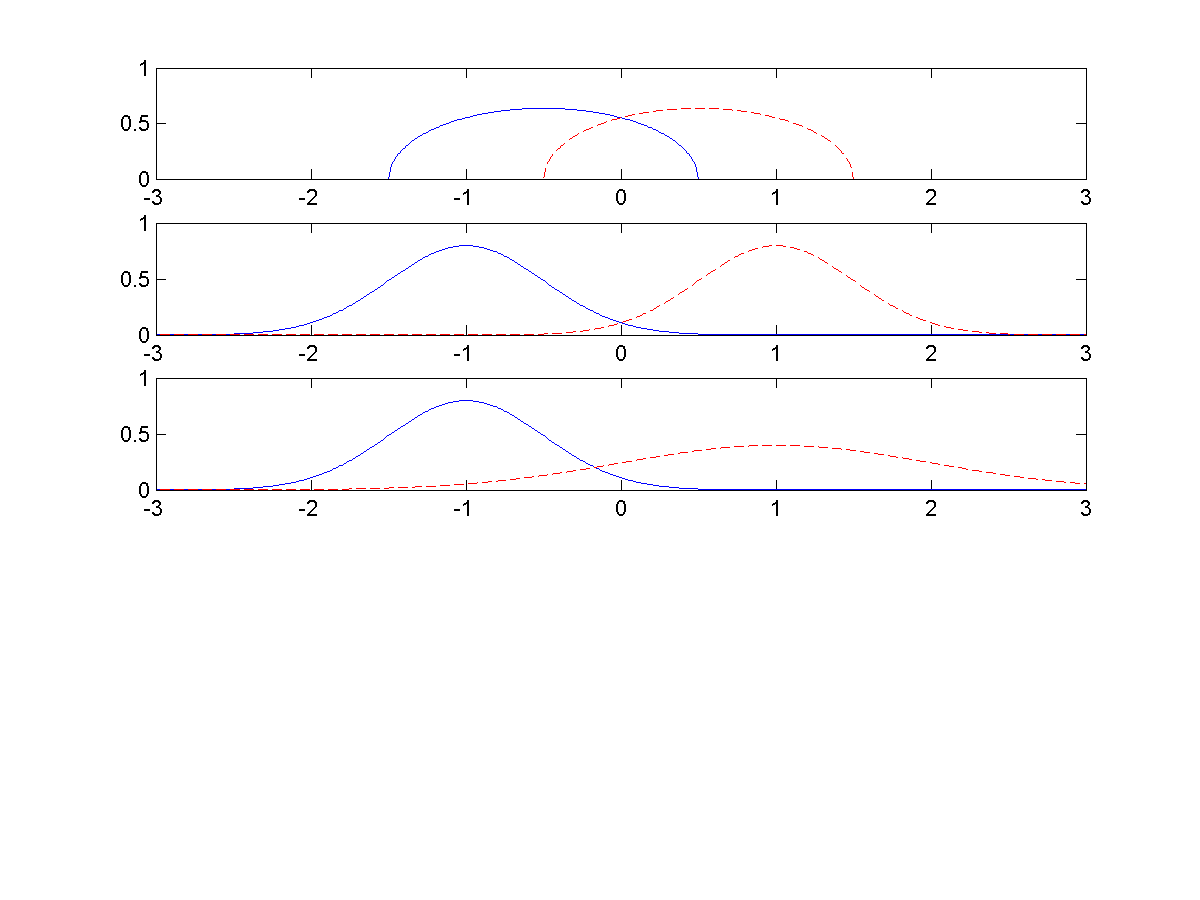}
%\end{center}
\caption{Three one-dimensional examples that illustrate assumption
\mutual. In each example (row), $P_0$ is on the left (solid line) and
$P_1$ on the right (dotted line). In the first two examples, %\mutual\,
mutual irreducibility holds,
but in the third example it does not. See text for details.
\label{fig:mutual}}
\end{figure}

\section{Mutual Irreducibility:
  Sufficiency, Necessity, and Maximal Denoising}
\label{sec:mutual}

%Recall from the introduction that $P_0$ and $P_1$ are mutually
%irreducible
%iff $\ks(P_0|P_1)=0$ and $\ks(P_1|P_0)=0$.

We argue that mutual irreducibility of $P_0$ and $P_1$ is both necessary
and sufficient for identifiability of the elements $(\pi_0, \pi_1, P_0,
P_1)$ of the contamination models, and relate it to the notion of maximal
denoising of the contaminated distributions. Since our focus in this
section is identifiability and not estimation, our discussion is at the
population level.

\subsection{Sufficiency of Mutual Irreducibility for Identifiability}
\label{sec:suff}

Recalling the result
of Lemma~\ref{le:le1}, the distributions $\Ptn$ and $\Ptp$ can be written
\begin{align}
\Ptn & = (1-\ptn) P_0 + \ptn \Ptp  \nonumber \\
\Ptp & = (1-\ptp) P_1 + \ptp \Ptn. \nonumber  \
\end{align}
By Corollary~\ref{cor:irrd},
we can identify $\tilde{\pi}_0$ and $\tilde{\pi}_1$ provided the following
condition holds:
\begin{description}
\item[\irreducible] $P_0$ is irreducible with respect to $\Ptp$ and $P_1$
is irreducible with respect to $\Ptn$.
\end{description}

We prefer an irreducibility assumption based on the true class-conditional
distributions, and so introduce the following:
\begin{description}
\item[\mutual] $P_0$ and $P_1$ are mutually irreducible.
\end{description}
Note that it follows from assumption \mutual \, that $P_0 \ne P_1$, which
is a hypothesis of Lemma \ref{le:le1}.
We now establish that \mutual\, and \irreducible\, are essentially equivalent.
\begin{lemma}
\label{le:le2}
%Condition~\irreducible is equivalent to \mutual and $\pi_1,\pi_0<1$.
$P_0$ is irreducible with respect to $\Ptp$ if and only if
$P_0$ is irreducible with respect to $P_1$ and $\pi_1<1$.
The same statement holds when exchanging the roles of the two classes.
In particular, under assumption {\bf (A)}, \mutual \, is equivalent to \irreducible \,.
\end{lemma}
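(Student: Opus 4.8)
The plan is to prove the stated ``iff'' by directly manipulating mixture decompositions, using the identity $\Ptp = (1-\pp)P_1 + \pp P_0$ from \eqref{eqn:contam1} to translate a decomposition of $P_0$ against $P_1$ into one against $\Ptp$, and conversely. A convenient feature of these manipulations, which I would point out, is that the ``residual'' distribution $F'$ is unchanged in both directions; only the mixing coefficient is rescaled.

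For the direction ($\Leftarrow$): assume $P_0$ is irreducible with respect to $P_1$ and $\pp < 1$, and suppose toward a contradiction that $P_0 = \beta \Ptp + (1-\beta)F'$ for some probability distribution $F'$ and some $\beta \in (0,1]$. Substituting the expression for $\Ptp$ and collecting the $P_0$ terms gives $(1-\beta\pp)P_0 = \beta(1-\pp)P_1 + (1-\beta)F'$; since $\beta \le 1$ and $\pp < 1$ force $1-\beta\pp > 0$, dividing yields $P_0 = \gamma' P_1 + (1-\gamma')F'$ with $\gamma' = \beta(1-\pp)/(1-\beta\pp)$, and one verifies $\gamma' \in (0,1]$. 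This contradicts irreducibility of $P_0$ with respect to $P_1$, so $P_0$ must be irreducible with respect to $\Ptp$.

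For the direction ($\Rightarrow$), I would argue the contrapositive: if $P_0$ is reducible with respect to $P_1$ or $\pp = 1$, then $P_0$ is reducible with respect to $\Ptp$. The case $\pp = 1$ is immediate, since then $\Ptp = P_0$, which is itself a reducible decomposition (take the coefficient $1$). Otherwise, write $P_0 = \gamma P_1 + (1-\gamma)F'$ with $\gamma \in (0,1]$; using $\pp < 1$, solve $\Ptp = (1-\pp)P_1 + \pp P_0$ for $P_1$, substitute, and collect $P_0$ terms to get $P_0 = \beta \Ptp + (1-\beta)F'$ with $\beta = \gamma/(1-\pp+\gamma\pp)$, where one checks $\beta \in (0,1]$. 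The symmetric statement follows by exchanging the roles of the two classes. Finally, assumption {\bf (A)} gives $\pp < 1$ and $\pn < 1$, so the side conditions hold automatically and the equivalence just proved, together with its class-swapped version, says precisely that \irreducible\ is equivalent to \mutual.

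The routine algebra is not the difficulty; the one point requiring care is the bookkeeping of coefficients. In each direction one must confirm that the derived mixing coefficient ($\gamma'$ or $\beta$) genuinely lies in $(0,1]$ so that the resulting identity is an admissible decomposition in the sense of the definition of irreducibility, and that the denominators $1-\beta\pp$ and $1-\pp+\gamma\pp$ are strictly positive. These positivity checks are exactly where the hypotheses $\beta \le 1$, $\gamma > 0$, and $\pp < 1$ are used, so I would state them explicitly rather than leave them implicit.
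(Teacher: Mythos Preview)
Your proof is correct and takes essentially the same approach as the paper: both argue each direction by contraposition, substituting $\Ptp = (1-\pp)P_1 + \pp P_0$ into a putative decomposition of $P_0$ and rescaling the mixing coefficient while keeping the same residual distribution. Your explicit formulas $\gamma' = \beta(1-\pp)/(1-\beta\pp)$ and $\beta = \gamma/(1-\pp+\gamma\pp)$ coincide with the paper's (up to variable naming), and your emphasis on verifying these lie in $(0,1]$ mirrors the paper's own bookkeeping.
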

\begin{proof}
This will be a proof by contraposition. Assume first that $P_0$ is not irreducible
with respect to $\Ptp$. Then there exists a probability distribution
$Q'$ and $0<\gamma\le 1$ such that
\begin{eqnarray}
& P_0 = \gamma  \Ptp + (1-\gamma) Q'. \nonumber
\end{eqnarray}
Now, plugging in Equation (2) for $\Ptp$ yields
\begin{eqnarray}
& P_0 = \gamma ((1-\pi_1)P_1 + \pi_1 P_0)+(1-\gamma)Q'. \nonumber
\end{eqnarray}
Solving for $P_0$ produces
\begin{eqnarray}
& P_0 = (1-\beta)Q' + \beta P_1, \nonumber
\end{eqnarray}
where $\beta = \gamma (\frac{1- \pi_1}{1-\gamma \pi_1})$.
Now, in the case where $\pi_1<1$, then $1-\gamma \pi_1 > 0$, and $\gamma - \gamma \pi_1 > 0$.
Since $0 < \gamma \le 1$, we deduce $0 < \beta \le 1$,
so that $P_0$ is not irreducible with respect to $P_1$.

%It follows
%that $P_0$ must be irreducible with respect to
%$\Ptp$. The same argument shows that $P_1$ is irreducible
% with respect to $\Ptn$.

Conversely, assume
by contradiction that $P_0$ is not irreducible with respect to $P_1$,
i.e., there exists a decomposition $P_0 = \gamma P_1 + (1-\gamma)Q'$ with
$\gamma > 0$. Then the decomposition $P_0 = \beta \Ptp + (1-\beta)Q'$
holds with $\beta=\frac{\gamma}{\gamma+(1-\pi_1)(1-\gamma)} \in (0,1]$, so
that $P_0$ is not irreducible with respect to $\Ptp$. Finally, in the case
$\pi_1=1$, we have $\Ptp=P_0$, in which case, trivially, $P_0$ is not
irreducible with respect to $\Ptp$ either.
\end{proof}

The following corollary summarizes the discussion of sufficiency.
\begin{cor}
\label{cor:suff}
If {\bf (A)} and \mutual\, hold, then
$\pi_0 = \frac{\ptn(1-\ptp)}{1-\ptp\ptn}$ and $\pi_1 =
\frac{\ptp(1-\ptn)}{1-\ptp\ptn}$, where $\ptn = \ks(\Ptn|\Ptp)$ and
$\ptp=\ks(\Ptp|\Ptn)$.
\end{cor}
Thus, $\pi_0$ and $\pi_1$ are explicit functions of $\Ptn$ and $\Ptp$
under {\bf (A)} and \mutual\,. It follows that
$P_0$ and $P_1$ can then be
recovered by  solving the identities \eqref{eqn:ssnd0}-\eqref{eqn:ssnd1}.
In fact, using these identities, it is easy to check that a slightly
stronger statement holds: for any arbitrary given $\Ptn \neq \Ptp$,
there is a unique solution $(\pi_0,\pi_1,P_0,P_1)$
of \eqref{eqn:contam0}-\eqref{eqn:contam1} satisfying {\bf (A)} and
\mutual\,. For short, we call this solution
the {\em unique mutually irreducible solution}
of the problem (condition {\bf (A)} being tacitly required.)
The uniqueness and various properties of this particular solution will be
explored in more detail in Theorem~\ref{thm:cplt} below;
in the next Section, we first argue that conditions {\bf (A)} and \mutual\,
are necessary for decontamination in a certain sense.

\subsection{Necessity}
\label{sec:nec}

 As noted earlier,
given $\Ptn\neq \Ptp$, there are in general many solutions
$(\pi_0,\pi_1,P_0,P_1)$ to the equations
\eqref{eqn:contam0}-\eqref{eqn:contam1}, so that
decontamination is not well-defined in the absence of additional conditions.
Requesting mutual irreducibility of $(P_0,P_1)$ is one way to ensure
unicity of the solution, and also has an interpretation in terms of maximum denoising
(see Theorem \ref{thm:cplt} below). But is it in any way a natural assumption?
We now argue that this condition is also the only one ensuring some relatively natural properties of the decontamination operation.

We introduce some additional notation: let $\fP$ denote the set
of probability distributions on $\sX$.
%endowed with the topology of the total variation distance.
Denote $\fPd$
the set of couples $(P,Q) \in \fP^2$ with $P\neq Q$.
We denote $\psi$ the contamination operator
from $[0,1]^2\times\fP^2$ to $\fP^2$, with
$\psi(\pi_0,\pi_1,P_0,P_1)=(\Ptn,\Ptp)$ given by
\eqref{eqn:contam0}-\eqref{eqn:contam1}.

Let $\phi$ denote a decontamination operator, i.e., a function from
a subset of $\fP^2$ to $[0,1]^2\times\fP^2$ such that
$\phi(\Ptn,\Ptp)$ returns a solution of \eqref{eqn:contam0}-\eqref{eqn:contam1},
in other words $\psi \circ \phi$ is the identity on the domain of $\phi$.
We further denote $\phi:=(\phi_\pi,\phi_P)$,
where $\phi_\pi(\Ptn,\Ptp)$ are the solution contamination weights and
$\phi_P(\Ptn,\Ptp)$ are the solution source distributions.
Finally, given a decontamination operator $\phi$, call the image of
$\phi_P$ the set of {\em $\phi$-sources} -- this is the set of
probability distribution couples that are considered as the uncontaminated
sources by the operator $\phi$ in at least one configuration of observed
contaminated distributions.

\begin{thm}
  \label{thm:necessity}
  Let $\phi$ denote a decontamination operator satisfying the following
  conditions:

(i) Universality: the domain of $\phi$ is $\fPd$;

(ii) Symmetry: if $\phi(\Ptn,\Ptp) = (\pi_0,\pi_1,P_0,P_1)$, then
$\phi(\Ptp,\Ptn) = (\pi_1,\pi_0,P_1,P_0)$\,;

%\marginpar{\footnotesize first version: (iii) $\phi$ is continuous on its domain;
%unfortunately no such decontamination exists. Hence the weaker, but
%more artificial looking, assumption}

(iii) Continuity of recovered contamination weights: for any fixed $P_0 \neq P_1$, the mapping
\[
(\pi_0,\pi_1) \in
\set{ (\pi_0,\pi_1) \in [0,1]^2 ; \pi_0 + \pi_1 < 1}   \mapsto \phi_\pi(\psi(\pi_0,\pi_1,P_0,P_1))
\]
is continuous\,;

(iv) Stability of recovered sources: for any $\phi$-source $(P_0,P_1)$, there exists
$\eps>0$ such that
for all $\pi_0,\pi_1 \leq \eps$:
\begin{equation}
\label{eq:stab}
\phi_P(\psi(\pi_0,\pi_1,P_0,P_1)) = (P_0,P_1)\,.
\end{equation}
Then $\phi(\Ptn,\Ptp)$ must be the unique mutually
irreducible solution of
\eqref{eqn:contam0}-\eqref{eqn:contam1} for all $\Ptn \neq \Ptp$.
\end{thm}

The interpretation of this result is that mutual
irreducibility is a necessary condition for decontamination
if conditions (i) to (iv) are required. Condition (i)
states that the decontamination operation should be defined on the full
domain of possible (distinct) observed distributions
and can thus be seen as a universality condition.
Condition (ii) is a natural
symmetry requirement. Condition (iii) is a continuity assumption
(changing the mixing weights by an arbitrarily small amount should not
result in a ``jump'' in the returned estimated contamination proportions)
and condition (iv) is a stability condition
(a couple $(P_0,P_1)$ identified as a source should still be output
as a source by the decontamination operator under small enough mutual
mixing proportions.)
%The general intuition of the proof is that
%for fixed irreducible uncontaminated sources $(P_0,P_1)$, (iii)-(iv) imply that
%the decontamination should be both continuous and locally constant in
%the mixing weights,

{\bf Remark:} Removing one of the ``natural'' requirements (i)-(iv)
invalidates the conclusion. For example, restricting decontamination
to a certain specific model of sources -- say Gaussian distributions --
could give rise to a non-mutually irreducible decontamination, coherent within
that
model but forgoing universality (i). If we remove continuity requirement
(iii), we can find a decontamination operator that is not mutually
irreducible
and satisfies the other conditions by ``tiling'' the solution space: for
any $(P_0,P_1)$ mutually irreducible,
any $(\pi_0,\pi_1)$ such that $\pi_0 + \pi_1 <1$,
$(\Ptn,\Ptp) = \psi(\pi_0,\pi_1,P_0,P_1)$, for $\pi_i \in [\frac{k_i}{n},\frac{k_i+1}{n})$ ($n$ can be chosen
arbitrarily), define the decontamination $\phi$ as
$\phi=(\phi_\pi,\phi_P)$ with
\[\begin{cases}
\phi_\pi(\Ptn,\Ptp) := \left(\frac{\pi_0-k_0/n}{1-(k_0+k_1)/n},
\frac{\pi_1-k_1/n}{1-(k_0+k_1)/n} \right);\\
\phi_P(\Ptn,\Ptp) := \psi\left(\frac{k_0}{n} ,
\frac{k_1}{n}, P_0, P_1\right).
\end{cases}
\]
It is easy to check that $\pi_0 + \pi_1 <1$ implies $\phi_\pi(\Ptn,\Ptp) \in [0,1]^2$ and satisfies {\bf (A)}.
Then the above $\phi$ satisfies (i),(ii) and (iv) but is not the
mutually irreducible
decontamination.
%The question is therefore, is there a more palatable alternative to (iii) that would naturally exclude the above
%tiling solution.
Finally, stability condition (iv) is needed in order to prevent ``trivial''
decontaminations such as $\phi(\Ptn,\Ptp)=(0,0,\Ptn,\Ptp)$, which is obviously
continuous. Excluding the everywhere trivial decontamination is not enough,
as a decontamination could also be trivial on part of the space only.
%Condition (iv) is defendable in the following sense: if  $(P_0,P_1)$
%is identified as a source in some configuration, it seems logical
%to require that it is still identified as a source for small enough,
%but positive, mutual contamination of $(P_0,P_1)$.

\subsection{Maximal Denoising}
\label{sec:max}

To conclude this section, we present a result that rounds out the
discussion of the initial and modified contamination models, and mutual
irreducibility. In particular, we describe all possible solutions
$(\pi_0,\pi_1,P_0,P_1)$ to our model
equations~\eqref{eqn:contam0}-\eqref{eqn:contam1} when $\Ptn,\Ptp$ are
given and arbitrary, and an equivalent characterization of the unique
mutually irreducible solution. It can be seen as an analogue of
Proposition \ref{prop:canondecmp} for the label noise contamination
models.

\begin{thm}
\label{thm:cplt}
Let $\Ptp\neq\Ptn$ be two given distinct probability distributions.
Denote by $\Lambda$ the feasible set of quadruples $(\pi_0,\pi_1,P_0,P_1)$ such that
{\bf (A)} and equations \eqref{eqn:contam0}-\eqref{eqn:contam1} are satisfied.
\begin{enumerate}
\item There is a unique quadruple $(\pi_0^*,\pi_1^*,P_0^*,P_1^*) \in \Lambda$
so that \mutual\, holds.
\item Denoting $\ptn^* := \ks(\Ptn|\Ptp)<1$ and $\ptp^* :=
\ks(\Ptp|\Ptn)<1$, it holds
\begin{align}
\label{eqn:expl}
\pi_0^* & = \frac{\ptn^*(1-\ptp^*)}{1-\ptp^*\ptn^*}, &  \pi_1^* = \frac{\ptp^*(1-\ptn^*)}{1-\ptp^*\ptn^*}\,.
\end{align}
\item The feasible region $R$ for the proportions $(\pi_0,\pi_1)$ (that is, the projection of $\Lambda$
to its first two coordinates, which is also one-to-one),
is the closed quadrilateral defined by the intersection of the positive
quadrant
of $\mathbb{R}^2$ with the half-planes given by
\begin{align}
\label{eqn:feas}
\pi_0 + \pi_1 \ptn^* & \leq \ptn^*, & \pi_1 + \pi_0 \ptp^* & \leq \ptp^*\,.
\end{align}
%To each $(\pi_0,\pi_1) \in R$ corresponds a single solution quadruple in $\Lambda$.
\item The mutually irreducible solution $(\pi_0^*,\pi_1^*,P_0^*,P_1^*)$ is
also equivalently characterized as:
\begin{itemize}
\item the unique maximizer of $(\pi_0+\pi_1)$ over $\Lambda$;
\item the unique extremal point of $\Lambda$ where both of the constraints in \eqref{eqn:feas} are active;
\item the unique maximizer over $\Lambda$ of the total variation distance $\norm{P_0-P_1}_{TV}$.
% between the source distributions.
\end{itemize}
\end{enumerate}
\end{thm}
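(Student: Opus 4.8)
The plan is to move to the decoupled representation of Lemma~\ref{le:le1} and observe that, in those coordinates, $\Lambda$ is simply a rectangle. Write $\mu := \Ptn$ and $\nu := \Ptp$ (distinct probability distributions). By Lemmas~\ref{le:le1} and~\ref{le:le1conv}, a quadruple $(\pi_0,\pi_1,P_0,P_1)$ lies in $\Lambda$ if and only if there exist $\ptn,\ptp\in[0,1)$ with $\mu = (1-\ptn)P_0 + \ptn\nu$ and $\nu = (1-\ptp)P_1 + \ptp\mu$, with $\pi_0,\pi_1$ given by the formulas of Lemma~\ref{le:le1conv}; moreover the map $(\ptn,\ptp)\mapsto(\pi_0,\pi_1)$ is a bijection onto $\{\pi_0,\pi_1\ge 0:\pi_0+\pi_1<1\}$ with inverse $\ptn=\pi_0/(1-\pi_1)$, $\ptp=\pi_1/(1-\pi_0)$. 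Given such a decomposition, $P_0 = (\mu-\ptn\nu)/(1-\ptn)$ is a genuine probability distribution exactly when $0\le\ptn\le\nu^*(\mu,\nu)=:\ptn^*$, by the maximality statement in Proposition~\ref{prop:canondecmp}; symmetrically $0\le\ptp\le\nu^*(\nu,\mu)=:\ptp^*$. Hence these maps restrict to a bijection between $\Lambda$ and the rectangle $[0,\ptn^*]\times[0,\ptp^*]$ in $(\ptn,\ptp)$-space, with $P_0,P_1$ determined by $(\ptn,\ptp)$; in particular the projection of $\Lambda$ to $(\pi_0,\pi_1)$ is injective, and $\ptn^*,\ptp^*<1$ since $\mu\ne\nu$.

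For Parts~1 and~2, note that by Lemma~\ref{le:le2}, under {\bf (A)} the assumption \mutual\ is equivalent to \irreducible, i.e.\ to $P_0$ being irreducible w.r.t.\ $\nu$ and $P_1$ w.r.t.\ $\mu$. Applying Corollary~\ref{cor:irrd} to the two decoupled decompositions, this holds iff $\ptn=\nu^*(\mu,\nu)=\ptn^*$ and $\ptp=\ptp^*$. So within the rectangle parametrization there is exactly one mutually irreducible quadruple, the corner $(\ptn^*,\ptp^*)$; existence is immediate because $P_0^*=(\mu-\ptn^*\nu)/(1-\ptn^*)$ is a valid distribution irreducible w.r.t.\ $\nu$ by Proposition~\ref{prop:canondecmp}, and $\pi_1^*<1$, and symmetrically for $P_1^*$. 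Substituting $(\ptn,\ptp)=(\ptn^*,\ptp^*)$ into the Lemma~\ref{le:le1conv} formulas yields~\eqref{eqn:expl}. For Part~3, pushing the rectangle through the explicit map, the constraints $\ptn\le\ptn^*$, $\ptp\le\ptp^*$ rearrange (the denominators $1-\pi_1$, $1-\pi_0$ being positive) to $\pi_0+\pi_1\ptn^*\le\ptn^*$ and $\pi_1+\pi_0\ptp^*\le\ptp^*$; together with $\pi_0,\pi_1\ge 0$ these also force $\pi_0,\pi_1<1$ and recover the whole image, so $R$ is exactly the stated quadrilateral, with vertices $(0,0)$, $(\ptn^*,0)$, $(0,\ptp^*)$, and the intersection of the two diagonal edges, which a short computation identifies as $(\pi_0^*,\pi_1^*)$. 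That last vertex is the unique point of $R$ where both constraints of~\eqref{eqn:feas} are active (equivalently the corner $(\ptn^*,\ptp^*)$), which is the second bullet of Part~4.

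For the remaining bullets of Part~4, subtract the two model equations: $\mu-\nu=(1-\pi_0-\pi_1)(P_0-P_1)$, hence $\norm{P_0-P_1}_{TV}=\norm{\mu-\nu}_{TV}/(1-\pi_0-\pi_1)$. Since $\norm{\mu-\nu}_{TV}>0$ is fixed and $1-\pi_0-\pi_1>0$ on $\Lambda$ by {\bf (A)}, maximizing the total variation distance is the same as maximizing $\pi_0+\pi_1$. Maximizing the linear functional $\pi_0+\pi_1$ over the quadrilateral $R$ is then a check at the four vertices: its value at $(\pi_0^*,\pi_1^*)$ exceeds both $\ptn^*$ and $\ptp^*$ (the differences simplifying to $\ptp^*(1-\ptn^*)^2/(1-\ptp^*\ptn^*)\ge 0$ and its symmetric counterpart), and since $\pi_0+\pi_1$ is non-constant on every edge of $R$ the maximizer is the single vertex $(\pi_0^*,\pi_1^*)$. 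The degenerate cases $\ptn^*=0$ or $\ptp^*=0$ collapse $R$ to a segment or a point, and the same conclusions hold verbatim.

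The step I expect to be the main obstacle is the first one: making sure the decoupled representation both captures \emph{all} of $\Lambda$ and has its feasibility governed \emph{exactly} by $\ptn\le\nu^*(\mu,\nu)$, since this is what converts mutual irreducibility into "the parameter sits at the corner" through Corollary~\ref{cor:irrd}. Once the $(\ptn,\ptp)$-rectangle picture is in place, everything else — the quadrilateral $R$, the formula~\eqref{eqn:expl}, and both optimization characterizations — is routine, the only genuine trick being the identity $\mu-\nu=(1-\pi_0-\pi_1)(P_0-P_1)$ that reduces the total-variation statement to the linear-programming statement.
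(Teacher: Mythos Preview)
Your proposal is correct and follows essentially the same route as the paper: pass to the decoupled $(\ptn,\ptp)$ coordinates via Lemmas~\ref{le:le1}--\ref{le:le1conv}, use Proposition~\ref{prop:canondecmp} to see that feasibility is exactly the rectangle $[0,\ptn^*]\times[0,\ptp^*]$, invoke Lemma~\ref{le:le2} and Corollary~\ref{cor:irrd} to pin the mutually irreducible solution at the corner, and use the subtraction identity $\Ptn-\Ptp=(1-\pi_0-\pi_1)(P_0-P_1)$ for the total-variation statement. The only cosmetic difference is in the first bullet of Part~4: the paper argues that $\pi_0+\pi_1 = 1 - (1-\ptp)(1-\ptn)/(1-\ptp\ptn)$ is strictly increasing in each of $\ptn,\ptp$, so the maximum is at the rectangle corner, whereas you check the linear functional at the vertices of the quadrilateral $R$; both arguments are equally short and valid.
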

The proof of the theorem relies on the explicit one-to-one correspondence
established in Lemmas~\ref{le:le1} and~\ref{le:le1conv} between the solutions of the original decomposition
\eqref{eqn:contam0}-\eqref{eqn:contam1} and its decoupled reformulation \eqref{eqn:ssnd0}-\eqref{eqn:ssnd1}.
The result of Proposition~\ref{prop:canondecmp}
is applied to the decoupled formulation, then pulled back, via the correspondence,
in the original representation. The last statement concerning the total variation norm is based on the
relation
\[
(P_1-P_0) = (1 - \pn- \pp)^{-1} (\Ptp - \Ptn),
\]
obtained by subtracting \eqref{eqn:contam0} from \eqref{eqn:contam1}. Therefore, the maximum feasible value of
$\norm{P_1 - P_0}_{TV}$ corresponds to the maximum of $(\pi_0+\pi_1)$,
i.e., the unique mutually irreducible solution.

The geometrical interpretation of this theorem is visualized on
Figure~\ref{fig:geo}. In particular, point 1 of the
theorem shows that
conditions {\bf (A)} and \mutual\, do not restrict the class of possible
observable contaminated distributions $(\Ptp,\Ptn)$; rather, they ensure
in all cases the identifiability of the mixture model.
%, given arbitrary (distinct) contaminated distributions.
Point 4 indicates that the unique
solution satisfying the mutual irreducibility condition \mutual\, can be
characterized as maximizing the possible total label noise
level $(\pi_0+\pi_1)$, or, still equivalently, the total variation separation of the source
probabilities $P_0,P_1$. In this sense, the mutually irreducible solution
can also be interpreted as {\em maximal label denoising} or {\em maximal source
separation} of the observed contaminated distributions.

\begin{figure}
\begin{center}
\scalebox{0.33}{\input{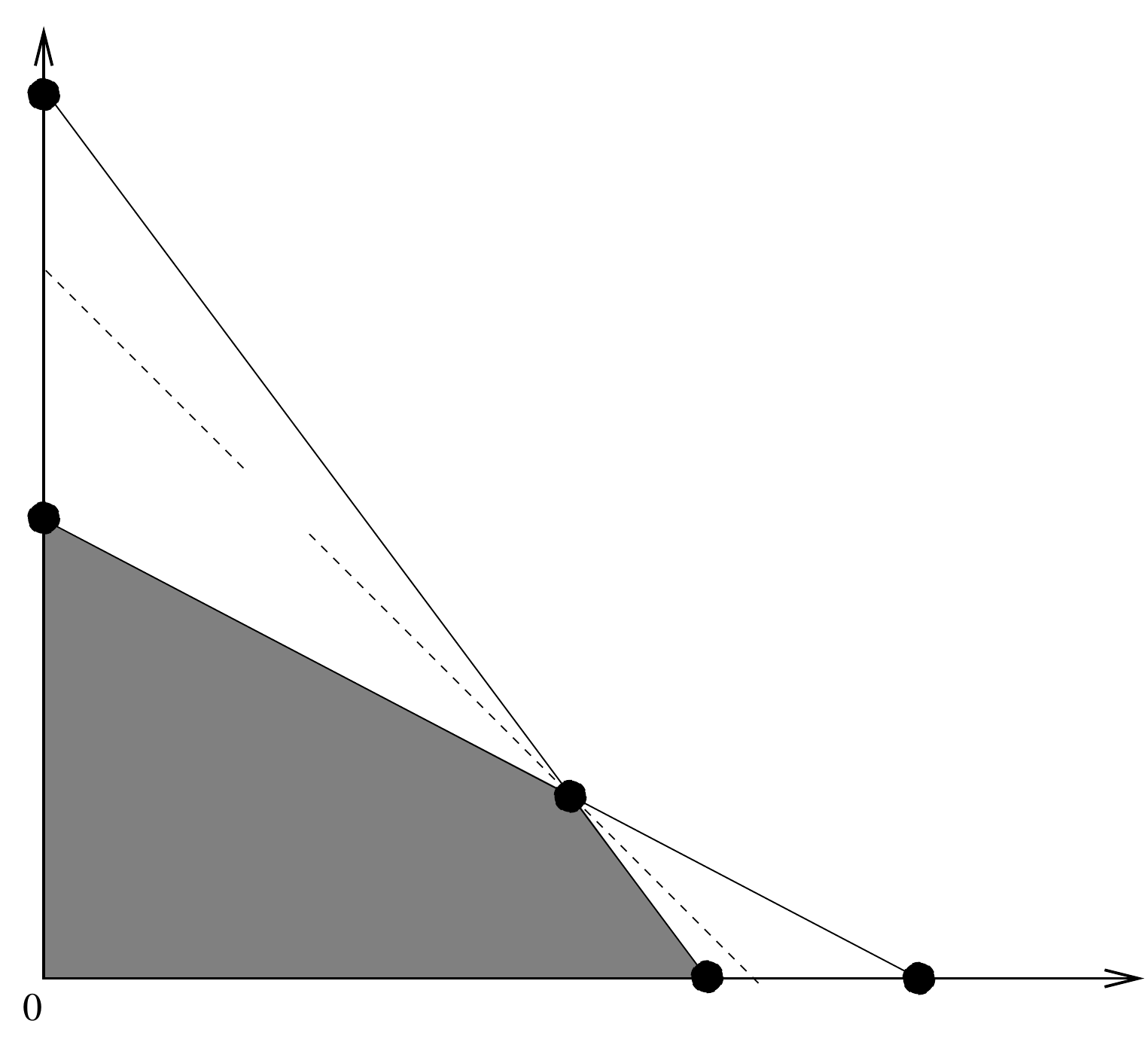_t}}
\end{center}
\caption{\label{fig:geo} Geometry of the feasible region $\Lambda$ for
proportions $(\pi_0,\pi_1)$ solutions of
the contamination model \eqref{eqn:contam0}-\eqref{eqn:contam1}, when contaminated
distributions $(\Ptn,\Ptp)$ are observed and the true distributions $(P_0,P_1)$ are unknown.
Each feasible $(\pi_0,\pi_1)$ corresponds to a single associated solution $(P_0,P_1)$.
The extremal point $(\pi_0^*,\pi_1^*)$ is the unique
point corresponding to a mutually irreducible solution $(P_0^*,P_1^*)$. The
dashed line indicates the maximal level line $(\pi_0+\pi_1)=c$ intersecting with $\Lambda$.}
\end{figure}

\section{Mixture Proportion Estimation and a Rate of Convergence}
\label{sec:mpe}

\citet{blanchard10} present a universally consistent estimator $\khat$ of
$\ks(F|H)$. We review this estimator below. They also establish a ``no
free lunch" result stating that no estimator of $\ks(F|H)$ can converge at
a fixed rate for all $F$ and $H$. In this section we also introduce
distributional assumptions under which the estimator of
\citet{blanchard10} converges at a known rate.

We begin by reviewing the universally consistent estimator of $\ks(F|H)$ introduced by \citet{blanchard10}.
Let $F$ and $H$ be probability measures on a Borel space $(\sX, \fS)$. Recall from Proposition \ref{prop:ks}
$$
\kappa^{*}(F|H)=\inf_{S \in \fS, H(S) > 0} \, \frac{F(S)}{H(S)}.
$$
The basic idea is to replace $F$ and $H$ by empirical estimates and take
the infimum over a union of VC classes. Thus,
consider a sequence of VC classes of sets,
$(\sS_k)_{k\geq 1}$, with respective
(finite) VC dimensions $(V_k)_{k\geq 1}$. Define
$\eps_i(k,\delta_i) := 3 \sqrt{\frac{V_k \log (n_i+1) - \log
\delta_i/2}{n_i}}$\, for $i=0,1$.
By the VC inequality, for any $i=0,1$, $\delta_i \in (0,1)$,
$k\ge 1$ and any distribution $Q$ on $\sX$\,,
with probability at least $1-\delta_i$\, over the draw of an i.i.d. sample
of size $n_i$ according to $Q$\,, we have
\begin{equation}
\label{unibound}
\forall S \in \sS_k\, \ \ \abs{Q(S) - \wh{Q}(S)} \leq
\eps_i(k,\delta_i)\,,
\end{equation}
where $\wh{Q}$ denotes the empirical distribution built on the sample.

In MPE we have training data
\begin{align}
X_0^1, \ldots, X_0^{n_0} & \stackrel{iid}{\sim} H, \\
X_1^1, \ldots, X_1^{n_1} & \stackrel{iid}{\sim} F.
\end{align}
For $k \ge 1$, define
\begin{equation}
\label{eqn:khat3}
\khat(k, \delta_0, \delta_1) := \inf_{S \in \sS_k} \frac{\wh{F}(S) +
\eps_1(k,\delta_1)}{(\wh{H}(S) - \eps_0(k,\delta_0))_+}
\end{equation}
where $( \cdot )_+$ is the max of its argument and zero (the ratio is defined
to be $\infty$ if the denominator is zero), and where
$\wh{F}(S)$ and $\wh{H}(S)$ are the empirical true positive
and false positive probabilities associated with the rejection region $S$. By
the VC inequality and Proposition \ref{prop:ks},
$\khat(k, \delta_0, \delta_1)$ is an upper bound on $\ks(F|H)$, with
probability at least $1-\delta_0 - \delta_1$.

%To
%interpret this quantity, notice that
%for any S, the point
%$(H(S),F(S))$ in ROC space is below the optimal ROC, and therefore
%$$
%\frac{1-F(S)}{1-H(S)}
%$$.
%is an upper bound on $\ks(F|H)$.
%Therefore,
%$\khat(k, \delta_0, \delta_1)$ is an upper bound on $\ks(F|H)$, with
%probability at least $1-\delta_0 - \delta_2$.

Next, define
$$
\khat(\delta_0,\delta_1) := \inf_{k \ge 1} \, \khat(k,
\delta_0 k^{-2}, \delta_1 k^{-2}).
$$
By the union bound, this is also an upper bound on $\ks$, with probability at
least $1-2(\delta_0 + \delta_1)$, since $\sum_k k^{-2} = \pi^2/6 < 2$.
To ensure that this upper
bound approaches $\ks$ as $n_0, n_1 \to \infty$, the sequence
$(\sS_k)_{k=1}^\infty$ is assumed
to satisfy the following universal approximation property, which we refer to as
{\bf (AP1)}:
For any $S^* \in \fS$\,, and any
distribution $Q$\,,
\begin{equation*}
\liminf_{k \rightarrow \infty} \inf_{S \in \sS_k} Q(S \Delta S^*)
=0\,,
\end{equation*}
where $S \Delta S^* = S \backslash S^* \cup S^* \backslash S$ is the symmetric
set difference.

Finally, $\khat$ is defined as $\khat = \khat(\frac1{n_0},\frac1{n_1})$.
\citet{blanchard10} show the following, which makes no assumption on the
distributions $F$ and $H$ and thus establishes a universally consistent method
for MPE.
\begin{thm}[\citet{blanchard10}]
\label{thm:consist}
With probability at least $1 - 2(\frac1{n_0}+\frac1{n_1})$, $\khat \ge \ks(F|H)$.
Furthermore, if $(\sS_k)_{k=1}^\infty$ satisfies {\bf (AP1)}, then $\khat
\stackrel{\mbox{i.p.}}{\longrightarrow} \ks(F|H)$ as $\min\{n_0,n_1\} \to
\infty$.
\end{thm}
It should be noted that the statement of the consistency result of
\citet{blanchard10} contains a slight error. We present a correction to
the statement of the consistency result in an appendix; see also
\citet{scottWSLnotes}. The error/correction does not affect the present
work.

We now introduce an assumption on $F$ and $H$ that will ensure a certain
rate of convergence for $\khat$ above. This rate will be used in the next
section to establish consistency of a discrimination rule. The support of
a distribution $Q$, denoted $\supp(Q)$, is the smallest closed set whose
complement has measure zero.
\begin{description}
\item[(D)] There exists a distribution $G$ and $\gamma \in [0,1]$ such
that
$\supp(H) \not\subset \supp(G)$ and $F = (1-\gamma) G + \gamma H$.
\end{description}
The assumption $\supp(H) \not\subset \supp(G)$ clearly implies that $G$ is
irreducible with respect to $H$, and therefore $\gamma$ in {\bf (D)} is
equal
to $\ks(F|H)$.

In addition, we adopt a modified approximation condition on the sequence
$(\sS_k)$, referred to as {\bf (AP2)}: For all $G$, $H$ with $\supp(H) \not
\subset \supp(G)$ there exists $k \ge 1$ and $S \in \sS_k$ s.t. $G(S) = 0$ and
$H(S) > 0$.

{\bf Remark:} {\bf (AP1)} requires that the sets in $\sS_k$ become
increasingly
complex, so that $V_k \to \infty$. On the other hand, {\bf (AP2)} does not. For
example, if $\sX = \reals^d$ and $\fS$ is the Borel $\sigma$-algebra
generated by the standard topology on $\reals^d$, {\bf
(AP2)} is satisfied taking $\sS_1$ to be the VC class of all open balls $\{x
: \| x - c \| < r\}, c \in \reals^d, r > 0$, and $\sS_k = \emptyset$ for
$k \ge 2$. In this
case,
we could even simplify the estimator of $\ks$ to be $\khat' :=
\khat(1,\frac1{n_0},\frac1{n_1})$, and the rate of convergence presented below
would still hold (the proof requires only minor modifications). However, we elect
to work with the definition of $\khat$ above to emphasize that the rate of
convergence applies to the universally consistent estimator.
%We also remark that
%there does exist a sequence of sets of sets $(\sS_k)$ satisfying both {\bf
%(AP1)}
%and {\bf (AP2)}, namely, $\sS_k$ is defined to be the set of all different
%possible unions of cells of sidelength $1/k$ in a regular partition of
%$\reals^d$. Thus, the same estimator $\khat$ can enjoy both universal consistency
%and (when {\bf (D)} holds) a rate of convergence.
\begin{thm}
\label{thm:mperate}
Suppose $(\sS_k)_{k\ge 1}$ is chosen to satisfy {\bf (AP2)}.
If $F$ and $H$ are such that {\em \bf (D)} holds, then there exists a
constant $C>0$
such that for $n_0$ and $n_1$ sufficiently large, the estimator $\khat$ satisfies
\begin{equation}
\label{eqn:mperate}
\Pr\Bigg(\left|\khat - \ks\right| \ge C\Bigg[\sqrt{\frac{\log n_0}{n_0}} +
\sqrt{\frac{\log n_1}{n_1}} \Bigg] \Bigg)
\le \frac2{n_0} + \frac2{n_1}
\end{equation}
where $\ks = \ks(F|H)$.
%\begin{align}
%\Pr\Bigg(\left|\khat - \ks(F|H)\right| \ge C\Bigg[\sqrt{\frac{\log n_0}{n_0}} &+
%\sqrt{\frac{\log n_1}{n_1}} \Bigg] \Bigg) \nonumber \\
%&\le \frac2{n_0} + \frac2{n_1}. \label{eqn:mperate}
%\end{align}
\end{thm}
In the next section, assume $\khat$ is defined in terms of VC classes
satisfying {\bf
(AP2)}.

\section{Consistent Classification with Unknown Label Noise Proportions}
\label{sec:consist}

The consistent estimator of $\ks$ just discussed provides a clear path to
the design of a consistent discrimination rule when the label noise
proportions are unknown. The estimator of $\ks$, together with Corollary
\ref{cor:suff}, can be combined to give consistent estimators of $\ptn$
and $\ptp$ under assumptions {\bf (A)} and {\bf (C)}. Plugging in these
estimators, along with empirical estimates of $\Rtn$ and $\Rtp$, into
Eqns. \eqref{eq:noisyz2} and \eqref{eq:noisyo2}, yields estimates of $R_0$
and $R_1$
that can
be shown to converge uniformly over a VC class of classifiers to their
true values. By allowing the size of the VC class to grow as the sample
size(s) grow, empirical risk minimization can be shown to be a consistent
discrimination rule with respect to any performance measure defined in
terms of $R_0$ and $R_1$. This idea utilizes standard ideas in learning
theory and is illustrated for the minmax criterion in \citet{scott13colt}.

One drawback of empirical risk minimization over VC classes is that it is
computationally intractable for most VC classes of interest. In the
remainder of this section we establish a computationally tractable
consistent discrimination rule based on surrogate risk minimization.

\subsection{Problem Formulation}

Let
$(X,Y)$ be random on $\sX \times \{0,1\}$ where $\sX$ is a Borel
space, and let $P$ denote the probability measure governing $(X,Y)$. Let $\sM$
denote the set of decision functions, i.e., the set of measurable functions $\sX
\to \reals$.  Every $f \in \sM$ induces a classifier $x \mapsto u(f(x))$
where $\step(t)$ is the unit step function
$$
\step(t) := \bcase 1, & t > 0 \\ 0, & t \le 0. \ecase
$$
For any
$f \in \sM$, define the {\em cost-insensitive P-risk} of $f$
$$
R_P(f) := \ev_{(X,Y)\sim P}[\ind{\step(f(X)) \ne Y}].
$$
Define the {\em cost-insensitive Bayes $P$-risk} $R_P^* := \inf_{f \in \sM}
R_P(f)$. It is
well known \citep{devroye96} that for any $f \in \sM$, the excess $P$-risk
satisfies
\begin{equation}
\label{eqn:excess}
R_P(f) - R_P^* = 2 \ev_X[\ind{\step(f(X)) \ne
\step(\eta(X)-\frac12)}|\eta(X)-\half|],
\end{equation}
where $\eta(x) := P(Y = 1 \, | \, X = x)$.

Generalizing the above, for any $\a \in (0,1)$ we can define the
{\em $\a$-cost-sensitive $P$-risk} for any $f \in \sM$,
\begin{align*}
R_{P,\a}(f) := \ev_{(X,Y)\sim P}[&(1-\a)\ind{Y=1}\ind{f(X)\le0} \\
& \ \ \ \ \ \ +\a\ind{Y=0}\ind{f(X)>0}].
\end{align*}
The corresponding Bayes risk is $R_{P,\a}^* := \inf_{f \in \sM}
R_{P,\a}(f)$, and the analogue to \eqref{eqn:excess} is \citep{scott12}:
\begin{equation}
\label{eqn:excesscost}
R_\a(f)-R_\a^* = \ev_X[\ind{\step(f(X)) \ne \step(\eta(X)-\a)}|\eta(X)-\a|]\,.
\end{equation}
Observe \eqref{eqn:excess}
corresponds to the case $\a = \frac12$.

With this background, we turn to the problem of classification with label
noise. We assume $(X,Y,\Yt)$ are jointly distributed, where $Y$ is the
true but unobserved label, and $\Yt$ is the observed but noisy label. As
in the rest of the paper, we focus on label noise that is independent of
the feature vector $X$, meaning that the conditional distribution of $\Yt$
given $X$ and $Y$ depends only on $Y$.

We would like to minimize $R_P(f)$, but we only have access to data from
$\Pt$, the joint distribution of $(X,\Yt)$. \citet{tewari13} show that
minimizing a cost-{\em sensitive} $\Pt$-risk is equivalent to minimizing
the cost-{\em insensitive} $P$-risk. We state and prove an equivalent
result which has a simpler proof. In this setting, $\pi_i = \Pr(Y=1-i \, |
\, \Yt = i)$, $i=0,1$. We introduce the following assumption on the amount
of label noise, which slightly strengthens {\bf (A)}.
\begin{description}
\item[(A')] $\pn < \frac12$ and $\pp < \frac12$.
\end{description}
The following result connects the cost-sensitive $\Pt$-risk to the
cost-insensitive $P$-risk.
\begin{lemma}
\label{lem:excess}
If {\bf (A')} holds, then for any $f \in \sM$,
\begin{equation}
R_{P}(f) - R_{P}^* = 2 (1 - \pp - \pn) (R_{\Pt,\a}(f) - R_{\Pt,\a}^*)\,,
\end{equation}
where $ \a = (\frac12 - \pn)/(1 - \pp - \pn)$.
\end{lemma}
\begin{proof}
Note that {\bf (A')} ensures $\a \in (0,1)$.
Define $\etat(x)$ in analogy to $\eta(x)$ by $\etat(x):= \Pr(\Yt = 1 |
X=x)$, leading to
\begin{align*}
  \eta(x) &= \Pr(Y=1,\Yt=1 | X=x)
  %\\& \ \ \ \
  + \Pr(Y=1,\Yt=0 | X=x) \\
  &= \Pr(Y=1|\Yt=1,X=x)\etat(x)
  %\\ & \ \ \ \
  + \Pr(Y=1|\Yt=0,X=x)(1-\etat(x)) \\
&= (1-\pi_1) \etat(x) + \pi_0 (1-\etat(x)) \\
&= (1-\pi_0 - \pi_1) \etat(x) + \pi_0.
\end{align*}
Observe that
\begin{align*}
\eta(x) - \half &= (1-\pi_0 - \pi_1) \etat(x) + \pi_0 - \half \\
&= (1-\pi_0 - \pi_1) [\etat(x) - \alpha].
\end{align*}
The result follows now from \eqref{eqn:excess} and \eqref{eqn:excesscost}:
\begin{align*}
&R_P(f) - R_P^* = 2\ev_X \Big[ \ind{u(f(X)) \ne
u(\eta(x)-\half)}|\eta(x)-\half| \Big] \\
&= \ \ 2 (1-\pi_0 - \pi_1) \ev_X \Big[ \ind{u(f(X)) \ne
u(\etat(x)-\a)}|\etat(x)-\a| \Big] \\
&= \ \ 2 (1 - \pp - \pn) (R_{\Pt,\a}(f) - R_{\Pt,\a}^*).
\end{align*}
\end{proof}

The problem we will address is the construction of a discrimination rule
$\fhat$ that is computationally tractable, does not know $\a, \pi_0$, or
$\pi_1$, and is such that $R_{P}(\fhat) - R_{P}^* \to 0$ in probability.
To achieve this, we develop an algorithm $\fhat$ based on surrogate risk
minimization such that $R_{\Pt,\a}(\fhat) - R_{\Pt,\a}^* \to 0$ in
probability.

\subsection{Surrogate Losses}
\label{sec:loss}

A {\em loss} is any measurable function $L: \{0,1\} \times \reals \to
[0,\infty)$. For example, the $P$-risk is defined in terms of the $0-1$
loss, $L(y,t) = \ind{y \ne \step(t)}$. Given a loss $L$ we define the risk
$$
R_{P,L}(f) = \ev_{(X,Y)\sim P}[L(Y,f(X))],
$$
and the corresponding optimal risk $R_{P,L}^* = \inf_{f \in \sM}
R_{P,L}(f)$.

A {\em surrogate loss} is one that is used as a surrogate for another,
such as a loss $L$ that is convex in its second argument in lieu of the
0-1 loss. Surrogate losses are common in machine learning because they can often
be optimized efficiently, unlike the 0-1 loss and its cost-sensitive variants.
The notion of classification calibration was developed to
theoretically justify the use of surrogate losses. A loss $L$ is said to be
{\em $\a$-classification calibrated} iff there exists an
increasing and continuous function $\theta$ with $\theta(0) = 0$ such that
for all $f \in \sM$,
$$
R_{P,\a}(f) - R_{P,\a}^* \le \theta(R_{P,L}(f) - R_{P,L}^*).
$$
An equivalent and more technical characterization of $\a$-CC is provided
by \cite{scott12}, but the above definition suffices for our purposes. The point
is
that driving the surrogate excess risk to zero drives the target excess
risk to zero for $\a$-CC losses, and the former can be accomplished by
computationally tractable methods like support vector machines, as shown below.

Any loss $L$ can be expressed as $L(y,t) = \ind{y = 1} L_1(t) + \ind{y = 0}
L_0(t)$. Given a loss $L$ and $\a \in (0,1)$, define
\begin{equation}
\label{eqn:alphaloss}
L_{\a}(y,t) := (1-\a) \ind{y = 1} L_1(t) + \a \ind{y = 0} L_0(t).
\end{equation}
\cite{scott12} establishes that $L$ is $\frac12$-CC iff $L_\a$ is $\a$-CC.
%\red{TODO: correct setting clash with losses which assume
%$y\in\set{-1,1}$ rather than $\set{0,1}$}
Several examples of $\frac12$-CC losses are known, so these readily translate to
examples of $\a$-CC losses via Eqn. \eqref{eqn:alphaloss}. In particular,
\citet{bartlett06} establish that if $L(y,t)=\phi((2y-1)t)$ where $\phi$
is convex and
differentiable at 0 with $\phi'(0) < 0$, then $L$ is $\frac12$-CC. This justifies
several common losses including the hinge loss ($\phi(z) = \max\{0,1-z\}$) and
the logistic loss ($\phi(z) = \log(1+\exp(-z))$).
Combining
these ideas with Lemma \ref{lem:excess} leads to the following result.
\begin{cor}
\label{prop:regret}
Suppose $L$ is $\half$-CC, assume {\bf (B)} is
satisfied and let $\a = (\frac12 - \pn)/(1 - \pp - \pn)$. Then there exists an
increasing and continuous function $\theta$ with
$\theta(0) = 0$ such that
for all $f
\in \sM$,
$$
R_{P}(f) - R_{P}^* \le \theta(R_{\Pt,L_\a}(f) - R_{\Pt,L_\a}^*).
$$
\end{cor}
\citet{tewari13} consider the setting where $\pn$ and $\pp$ are known. Using the
above result, they apply Rademacher complexity analysis
to establish performance guarantees for a classification strategy based on
regularized
empirical risk minimization with a surrogate loss $L_\a$.

\subsection{Estimating $\a$}
\label{sec:alphaest}

When $\pn$ and $\pp$ are unknown, a natural strategy is to base a learning
algorithm on a surrogate loss $L_{\wa}$, where $\wa$ is an estimate of $\a$. We
propose an estimate of the form
$$
\wa = \frac{\half - \pnhat}{1 - \pnhat - \pphat},
$$
where $\pnhat$ and $\pphat$ are estimates based on our previously
developed results. In particular, suppose we observe noisy data
$$
(X_1, \Yt_1), \ldots, (X_n, \Yt_n) \stackrel{iid}{\sim} \Pt,
$$
One difference to note going forward is that the sample sizes $n_0$ and
$n_1$ are now random, whereas before they were considered to be nonrandom.
This turns out to be a minor difference; see the proof of Proposition
\ref{prop:arate} below.

%For simplicity, we assume the sample sizes $n_0$ and $n_1$ are nonrandom.

Now, let $\ptnhat$ and $\ptphat$ be estimates of $\ptn$ and
$\ptp$ obtained by applying the estimator $\khat$ of Section
\ref{sec:mpe} twice. The formulas from Lemma \ref{le:le1conv} lead
to the estimates
\begin{equation}
\label{eqn:piest}
\pnhat = \frac{\ptnhat(1-\ptphat)}{1-\ptnhat \ptphat}
\text{ \ \ \ \ \ and \ \ \ \ \ }
\pphat = \frac{\ptphat(1-\ptnhat)}{1-\ptnhat \ptphat}.
\end{equation}
By Corollary \ref{cor:suff}, if {\bf (A)} and {\bf (C)} hold, then
$\ptn = \ks(\Ptn | \Ptp)$ and $\ptp = \ks(\Ptp | \Ptn)$, and consequently
$\pnhat$ and $\pphat$ are consistent estimators of $\pi_0$ and $\pi_1$,
respectively. For some of our subsequent analysis, we actually want
$\pnhat$ and
$\pphat$ (and therefore $\wa$) to converge at a known rate. Hence,
we want $\Ptn$ and
$\Ptp$ to satisfy assumption {\bf (D)} in both directions.
The following assumption, which strengthens {\bf (C)}, is sufficient for
this purpose.
\begin{description}
\item[(C')] $\supp(P_0) \not \subset \supp(P_1)$ and $\supp(P_1) \not
\subset
\supp(P_0)$.
\end{description}
This assumption is reasonable in many classification problems. It essentially
says that for each of
the two (noise-free) classes, there exist patterns belonging to that class that
could not possibly be confused with patterns from the other class. We have the
following.
\begin{prop}
\label{prop:arate}
If {\bf (A')} and {\bf (C')} hold, then there exist $C_1, C_2 > 0$ such
that for $n$ sufficiently large,
$$
\Pr \left( |\wa - \a| \ge C_1 \sqrt{\frac{\log n}{n}} \right) \le
\frac{C_2}{n}.
$$
\end{prop}
\begin{proof}
{\bf (A')} implies $\pn + \pp < 1$, and by {\bf (C')}, $P_0$ and $P_1$ are
mutually irreducible. Thus
Corollary \ref{cor:suff} implies $\ptn = \ks(\Ptn|\Ptp)$ and $\ptp =
\ks(\Ptp|\Ptn)$. We will apply Theorem \ref{thm:mperate} to both of the
estimators
$\ptnhat$ and $\ptphat$. To verify the assumptions of that theorem, we need to
verify {\bf (D)} for both $(F,H) = (\Ptp,\Ptn)$ and  $(F,H) =
(\Ptn,\Ptp)$. We will show {\bf (D)} for $(F,H) = (\Ptp,\Ptn)$, the other
case
being similar. From \eqref{eqn:ssnd1}, it suffices to show $\supp(\Ptp)
\not\subset \supp(P_0)$. But this holds because $\Ptp = (1-\pp)P_1 + \pp P_0$
(see Eqn. \eqref{eqn:contam1}) and $\supp(P_1) \not \subset \supp(P_0)$ and $\pp
< 1$. We can now apply Theorem
\ref{thm:mperate} to both $\ptnhat$ and $\ptphat$. To do so, since $n_0$
and $n_1$ are nonrandom in that result, we must condition on $n_0$ and
$n_1$, and appeal to the fact that, with high probability, $n_0$ and $n_1$
are proportional to $n$. In particular, if $\qt = \Pr(\Yt = 1)$, then the
relative Chernoff bound implies that with high probability, $n_1 \in
(\frac12 \qt n, \frac32 \qt n)$ and $n_0 \in
(\frac12 (1-\qt) n, \frac32 (1-\qt) n)$. Conditioning on $n_0$ and $n_1$
belonging to these intervals, Theorem \ref{thm:mperate} implies that both
$\ptphat$ and $\ptnhat$ converge at rates that are $O(\sqrt{\log n/n})$.
These rates lead to similar rates for $\pphat$
and $\pnhat$ (note in particular that assumption {\bf(A')} implies that
the denominators in \eqref{eqn:piest} are bounded away
from 0 by a fixed margin with large probability for $n$ large enough,
independently of $\pp,\pn$).
This in turn
leads to the desired rate for $\wa$.
\end{proof}

\subsection{Algorithm}

%In addition to the two data sets used to estimate $\a$, we assume a third
%data
%set
%$$
%(X_1, \Yt_1), \ldots, (X_n, \Yt_n) \stackrel{iid}{\sim} \Pt,
%$$
%where $\Pt$ is the joint distribution of $(X,\Yt)$. For simplicity, we
%assume
%that $n, n_0$, and $n_1$ are of the same order when tending to infinity.
%We also remark that it would be possible to estimate $\alpha$ using this
%third dataset only (and not the ones is Section
%\ref{sec:alphaest}).

We now introduce a consistent classification procedure based on surrogate losses
in the case of unknown label noise proportions.
The algorithm relies on the framework of reproducing kernel Hilbert spaces. Thus,
let $\sH$ be a RKHS, and let $L$
be a loss for binary classification. We say that $L$ is Lipschitz if $L(y,t)$ is
a Lipschitz function of $t$ for each $y$. The algorithm returns the classifier
\begin{equation}
\label{eqn:alg}
\fhat = \argmin_{f \in \sH} \frac1{n} \sum_{i=1}^n L_{\wa}(\Yt_i, f(X_i))
+
\lambda_n \| f \|_{\sH}^2,
\end{equation}
where $ L_{\wa}$ is the $\wa$-weighted cost-sensitive loss associated with $L$,
as defined in \eqref{eqn:alphaloss}. For example, if $L(y,t) =
\max\{0,1-(2y-1)t\}$ is
the hinge loss, $\fhat$ is a cost-sensitive support vector machine.

\subsection{First Consistency Result}

We will assume that the reproducing kernel $k$ associated with
$\sH$ is universal and bounded \citep{steinwart08}. The former property
implies
that elements of the RKHS can get arbitrarily close to the Bayes risk. The
latter property states that $\sup_x k(x,x) =: B^2 < \infty$. The Gaussian kernel
is
an example satisfying both of these properties.
\begin{thm}
\label{thm:discrim}
Assume {\bf (A')} and {\bf (C')} hold, that the reproducing kernel
associated with
$\sH$ is universal and bounded, and that $L$ is a
Lipschitz, $\half$-CC loss. Let
$\lambda_n > 0$ tend to zero as $n \to \infty$ such that $\lambda_n \sqrt{n /
\log n} \to \infty$. Then
$$
R_P(\fhat) - R_P^* \to 0\, \ \ \ \ \mbox{ in probability\,,}
$$
as $n \to \infty$.
\end{thm}

\subsection{Alternate Consistency Result with Clippable Losses}

It is possible to establish a consistency
theorem without requiring a rate of convergence on
$\wa$\, (thus only requiring the milder condition {\bf (C)} rather than
{\bf (C')}), at the expense of treating a more narrow class of losses.

A {\em $T$-clippable loss} $L(y,t)$
(see \citealp{steinwart08}, Section 2.2) satisfies the following
property:
\[
\forall y \in \set{0,1}\,, \forall t \in \reals\;\; : L(y,\clip_T(t)) \leq L(y,t)\,,
\]
where $\clip_T(t) := \min(T,\max(-T,t))\,.$
It is shown by \cite{steinwart08}, Lemma 2.23, that a convex loss
is $T$-clippable iff $\forall y \in \set{0,1}$,
the function $t \in \reals \mapsto L(y,t)$
admits a minimum which is attained for some $t\in[-T,T]$.
As a consequence, many common surrogate losses are clippable;
for instance the hinge loss, the squared loss and
the truncated squared loss are 1-clippable. On the other hand,
the logistic and the exponential losses are not clippable.

\begin{thm}
  \label{thm:clippable}
  Assume {\bf (A')} and {\bf (C)} hold, that the reproducing kernel
associated with
$\sH$ is universal and bounded, and that $L$ is a
Lipschitz, $T$-clippable, $\half$-CC loss. Let
$\lambda_n > 0$ tend to zero as $n \to \infty$ such that $\lambda_n n \to
\infty$. Define $\cfhat := \clip_T(\fhat)$\,,
where $\fhat$ is defined by \eqref{eqn:alg}.
Then
$$
R_P(\cfhat) - R_P^* \to 0 \ \ \ \ \mbox{ in probability,}
$$
as $n \to \infty$.
\end{thm}

\section{A More General Analysis of Co-Training}
\label{sec:cotrain}

Co-training is a model for binary classification in which the feature
vector can be partitioned into two sets of variables, called ``views." The
critical assumption of co-training is that the views are conditionally
independent, given the class label. We refer to this assumption as the
{\em co-training assumption}. \citet{blum98} show that under this
assumption, the optimal classifier can be learned from {\em unlabeled
data} only, provided the learner has access to a ``weakly-useful
predictor," which is a classifier that, roughly speaking, is at least
slightly better than random guessing. The basic idea is to apply the
weakly-useful predictor to one of the views to generate noisy labels for
the other view. By the co-training assumption, the problem now reduces to
classification with label noise. The original analysis assumes that the
true label is a deterministic function of either view. Our framework
allows us to relax this assumption.

To state our result, we assume that the feature vector $X$ and label $Y$
are jointly distributed with joint distribution $Q$. Let $P_0$ and $P_1$
be the class conditional distributions of $Q$. Furthermore, let $X$ be
expressed as $(X^A, X^B)$, representing the two views. Under the
co-training
assumption, $X^A$ and $X^B$ are conditionally independent given $Y$. The
unlabeled training data are $X_1, \ldots, X_n$. A {\em weakly-useful
classifier}
is a classifier $h$ such that $0 < Q(\{x : h(x) = 1\}) < 1$ and  $q_0(h)
+ q_1(h) < 1$, where
$$
q_i(h) = Q(Y = 1-i \, | \, h(X) = i).
$$

\begin{thm}
Let $h^A$ be a known weakly-useful classifier based on view $A$. Assume
that the class-conditional distributions of $X^B$ are mutually
irreducible, and let $X_1, \ldots, X_n$ be iid. Under the
co-training assumption, there exists a classification algorithm $\fhat$
such that $R_Q(\fhat) \to R_Q^*$ in probability as $n \to \infty$.
\end{thm}

\begin{proof}
Consider the data set
$$
(X^B_1, \tilde{Y}_1), \ldots,  (X^B_n, \tilde{Y}_n),
$$
where $\tilde{Y}_i = h^A(X^A)$. By the co-training assumption, the
class-conditional distribution of $\tilde{Y}$ given
$X^B$ and the true label $Y$ is not dependent on $X^B$. Therefore we have
the setting of a label noise problem. Since $0 < Q(\{x^A : h^A(x^A) =
1\}) < 1$, the numbers of examples $n_0$ and $n_1$ with each noisy label
grow with $n$. Furthermore, the contamination probabilities
$$
\pi_i = \Pr(Y = 1-i \, | \, \tilde{Y}=i)
$$
are just $\pi_i = q_i(h^A)$. Since $h^A$ is weakly-useful, we have that
$\pi_0 + \pi_1 < 1$. We also have mutual irreducibility for this label
noise problem, by assumption. Therefore, a consistent classification rule
exists by the construction in \citet{scott13colt}.
\end{proof}

The key point is that this result weakens the assumption of deterministic
class labels to a mutual irreducibility assumption. The existence of a
weakly-useful classifier could be guaranteed, for example, if a small
amount of labeled training data was available.

The previous argument relies on the consistent classification rule from
\cite{scott13colt}. The consistency result for classifiers based on clippable
surrogate losses, from earlier in this paper, could also be employed provided
the definition of a weakly-useful classifier is strengthened to require that
$q_i(h) < \frac12$ for each $i$.

\section{Mutual Irreducibility and Class Probability Estimation}
\label{sec:cpe}

In this section, we relate mutual irreducibility of $P_0$ and $P_1$ to
the problem of class probability estimation. Let $p_0$ and $p_1$ be
densities of $P_0$ and
$P_1$ with respect to a common dominating measure.
Further assume that the feature vector $X$ and label $Y$ are jointly
distributed with joint distribution $P$, and that $q := P(Y=1) \in (0,1)$.
The posterior probability that $Y=1$ is denoted
$$
\eta(x) := P(Y=1 \, | \, X=x ).
$$
The problem of estimating $\eta$ from data is known as class probability
estimation \citep{buja05,reid10}. The most well-known approach to class
probability estimation is logistic regression, which posits the model
$$
\widehat{\eta}(x) = \frac1{1 + \exp\{-(w^T x + b)\}},
$$
where $w$ and $x$ have the same dimension, and $b \in \R$. The
parameters $w$ and $b$ are fit to the data by maximum likelihood. More
generally, estimates for $\eta$ commonly have the form
$$
\widehat{\eta}(x) = \psi^{-1}(h(x))
$$
where $\psi: [0,1] \mapsto \R$ is a %n invertible
{\em link} function,
and $h$ is a decision function of some sort.

%Implicit in the logistic
%regression model, and other models for CPE, is the assumption
%that $\eta(x)$ is dense in $[0,1]$. In particular, if we

Now define
$$
\emin := \essinf_{x \in \sX} \ \eta(x) \ \ \ \ \ \mbox{ and } \ \ \ \ \
\emax := \esssup_{x \in \sX} \ \eta(x).
$$
%the logistic regression model assumes $\emin = 0$ and $\emax = 1$.
The following result connects the posterior class probability to mutual
irreducibility.

\begin{prop}
\label{prop:cpe}
With the notation defined above,
\begin{equation}
\label{eqn:emax}
\emax = \frac{1}{1+\frac{1-q}{q} \ks(P_1|P_0)}
\end{equation}
and
\begin{equation}
\label{eqn:emin}
\emin = 1 - \frac{1}{1+\frac{q}{1-q} \ks(P_0|P_1)}.
\end{equation}
Therefore, $P_0$ and $P_1$ are mutually irreducible if and only if $\emin
= 0$ and $\emax = 1$.
\end{prop}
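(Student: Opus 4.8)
The plan is to reduce the statement to the behaviour of the likelihood ratio $p_1/p_0$ and then to quote Lemma~\ref{lem:nulrt}. First I would apply Bayes' rule. Since $q\in(0,1)$, the marginal of $X$ has density $q\,p_1+(1-q)\,p_0$, so for almost every $x$ in $\supp P_0\cup\supp P_1$,
\[
\eta(x)=\frac{q\,p_1(x)}{q\,p_1(x)+(1-q)\,p_0(x)}=\phi\!\left(\frac{p_1(x)}{p_0(x)}\right),\qquad \phi(t):=\frac{1}{1+\frac{1-q}{q}\,t^{-1}},
\]
where $\phi:[0,\infty]\to[0,1]$ is continuous and strictly increasing with $\phi(0)=0$ and $\phi(\infty)=1$; the boundary values of $\phi$ encode exactly that $\eta=0$ on $\{p_1=0\}$ and $\eta=1$ on $\{p_0=0\}$.

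Next, because $\phi$ is continuous and monotone, $\emax=\phi\big(\esssup_x p_1(x)/p_0(x)\big)$ and $\emin=\phi\big(\essinf_x p_1(x)/p_0(x)\big)$, the essential extrema being taken over $\supp P_0\cup\supp P_1$. I would then note that $\supp P_1\setminus\supp P_0$ contributes only the value $+\infty$ to the ratio and $\supp P_0\setminus\supp P_1$ only the value $0$, so the essential infimum may be computed over $\supp P_0$ and the essential supremum over $\supp P_1$. By Lemma~\ref{lem:nulrt} the first equals $\nu^*(P_1,P_0)$ and the second equals $1/\nu^*(P_0,P_1)$; substituting these into the expressions for $\emin$ and $\emax$ and simplifying produces the two displayed identities \eqref{eqn:emax}--\eqref{eqn:emin}.

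The final equivalence then follows immediately: since $\frac{1-q}{q}$ and $\frac{q}{1-q}$ are strictly positive, $\emax=1$ holds if and only if the $\nu^*$ entering \eqref{eqn:emax} vanishes, and $\emin=0$ holds if and only if the $\nu^*$ entering \eqref{eqn:emin} vanishes; and simultaneous vanishing of $\nu^*(P_0,P_1)$ and $\nu^*(P_1,P_0)$ is, by the remark that $\nu^*(G,H)=0$ is equivalent to $G$ being irreducible with respect to $H$, precisely mutual irreducibility of $P_0$ and $P_1$. The one place that needs care is the measure-theoretic bookkeeping in the second step: justifying that the essential supremum and infimum of the likelihood ratio can be restricted to $\supp P_1$ and $\supp P_0$ respectively, and checking the degenerate cases where one support essentially contains the other, or the two supports are essentially disjoint (in which case one of $\emin,\emax$ already equals its extreme value and the corresponding $\nu^*$ is $0$, so the formulas still hold). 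Everything else is routine algebra.
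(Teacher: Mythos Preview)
Your approach is essentially the paper's: write $\eta$ via Bayes' rule as a monotone function of the likelihood ratio and then invoke Lemma~\ref{lem:nulrt}; the paper simply uses two separate algebraic rewritings of $\eta$ (one with $p_0/p_1$, one with $p_1/p_0$) rather than your single $\phi$, and glosses over the support bookkeeping you correctly flag.

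One caution: if you actually carry the substitution through, you get
\[
\emax=\phi\bigl(1/\nu^*(P_0,P_1)\bigr)=\frac{1}{1+\tfrac{1-q}{q}\,\nu^*(P_0,P_1)},\qquad
\emin=\phi\bigl(\nu^*(P_1,P_0)\bigr)=1-\frac{1}{1+\tfrac{q}{1-q}\,\nu^*(P_1,P_0)},
\]
i.e.\ with the two $\nu^*$'s swapped relative to the displayed \eqref{eqn:emax}--\eqref{eqn:emin}. The paper's own proof produces exactly these swapped formulas as well (it takes $\essinf p_0/p_1=\nu^*(P_0,P_1)$ for $\emax$), so the discrepancy is a typo in the statement, not a flaw in either argument; just do not claim your algebra reproduces \eqref{eqn:emax}--\eqref{eqn:emin} as printed without noting this.
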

\begin{proof}
By Bayes' rule, it is true that almost everywhere,
\begin{align*}
\eta(x) &= \frac{q p_1(x)}{q p_1(x) + (1-q) p_0(x)} \\
&= \frac1{1 + \frac{1-q}{q} \frac{p_0(x)}{p_1(x)}}.
\end{align*}
Equation \eqref{eqn:emax} now follows from Proposition \ref{prop:ks}.
Similarly, we have (almost everywhere)
\begin{align*}
\eta(x) &= 1 - \frac{(1-q) p_0(x)}{(1-q) p_0(x) + q p_1(x)} \\
&= 1 - \frac1{1 + \frac{q}{1-q} \frac{p_1(x)}{p_0(x)}}.
\end{align*}
Now \eqref{eqn:emin} follows from Proposition \ref{prop:ks}. The final
statement follows from \eqref{eqn:emax} and \eqref{eqn:emin} and the
definition of mutual irreducibility.
\end{proof}
Thus, estimates of $\ks(P_0|P_1)$ and $\ks(P_1|P_0)$ could be used to
inform choices about the design of the link function (e.g., its domain) and
model class of decision functions.

%This result gives us an alternate perspective for thinking about the
%validity of the mutual irreducibility assumption.
%If we believe that a
%logistic regression or similar link-based model is appropriate for class
%probability estimation, then mutual irreducibility is also a reasonable
%assumption.

Proposition \ref{prop:cpe} also suggest another
possible approach to mixture proportion estimation. Suppose
$\widehat{\eta}$ is an estimator for $\eta$ that is consistent with
respect to the supremum norm, and let $\widehat{q}$ be the empirical
estimate of $q$ based on a random sample from $P$. Inverting Equation
\eqref{eqn:emax},
$$
\widehat{\kappa}_{1,0} := \left(\frac1{\sup_{x \in \sX}
\widehat{\eta}(x)}-
1\right) \frac{\widehat{q}}{1-\widehat{q}},
$$
is a consistent estimate of $\ks(P_1|P_0)$. Similar remarks apply to
$\ks(P_0|P_1)$. Although this suggests that class probability estimation
solves mixture proportion estimation in the binary classification context, we
note that sup-norm consistency will require distributional assumptions, and
therefore the distribution-free estimator of \citet{blanchard10} is a more
general solution.

All of the above observations were present in our original technical
report on this topic \citep{scott13tr}. Since then, \citet{liutao16} and
\cite{menon15icml} have further explored the idea of estimating label
noise
proportions from the minimum and maximum of the contaminated class
probability function. In particular, we note the following.

An immediate corollary of Proposition \ref{prop:cpe} is the following.
Let $\Pt$ be the joint distribution on $(X,\Yt)$, $\qt = \Pt(\Yt = 1)$,
$\etat(x) = \Pt(Y=1 \, | \, X = x)$, and let $\etatmax$ and $\etatmin$
denote the essential supremum and infimum of $\etat$. Further let $\Ptp$
and $\Ptn$ denote the class conditional distributions of $X$ given $\Yt =
1, 0,$ respectively.

\begin{cor}
\label{cor:cpe}
Consider the setting of the previous paragraph.
If {\bf (A)} and {\bf (C)} hold, then
\begin{equation}
\label{eqn:pi0cpe}
\pi_0 = \frac{\etatmin(\etatmax-\qt)}{\qt(\etatmax - \etatmin)}
\end{equation}
and
\begin{equation}
\label{eqn:pi1cpe}
\pi_1 = \frac{(1-\etatmax)(\qt-\etatmin)}{(1-\qt)(\etatmax - \etatmin)}.
\end{equation}
\end{cor}
\begin{proof}
By Proposition \ref{prop:cpe}, we have that
$$
\etatmax = \frac{1}{1+\frac{1-\qt}{\qt} \ks(\Ptp|\Ptn)}
$$
and
$$
\etatmin = 1 - \frac{1}{1+\frac{\qt}{1-\qt} \ks(\Ptn|\Ptp)}.
$$
The result now follows from these equations, Corollary
\ref{cor:suff}, and algebra.
\end{proof}

Note that $\qt$ is easily estimated from the fraction of contaminated
training examples with $\Yt = 1$. Therefore, estimates of $\etatmax$ and
$\etatmin$ lead directly to estimates of the contamination proportions
$\pi_0$ and $\pi_1$. This approach to estimating label noise proportions
is
explored experimentally below, where it is compared with the ROC-based
estimator.

\citet{menon15icml} adopt the conditions $\emax = 1$ and $\emin = 0$
together with {\bf (A)} as their identifiability conditions for label
noise under the contamination model. From the above discussion, these
conditions are clearly equivalent to ours. \citet{liutao16} consider the
label flipping model for label noise. They consider an equivalent
sufficient condition based on $\eta$ in that context. Connections with our
mutual irreducibility assumption are noted in each of these works.

\section{Implementation of Estimators}
\label{sec:imp}

The ROC characterization from Proposition \ref{prop:ks} says that $\ks$ is
the minimum slope of any line passing through the point $(1,1)$ in ROC
space and any other point on the optimal ROC. If the optimal ROC happens
to be concave, this is the slope of the ROC at its right end-point. See
Fig. \ref{fig:roc}.

\begin{figure}
\centering
\includegraphics[width=\textwidth]{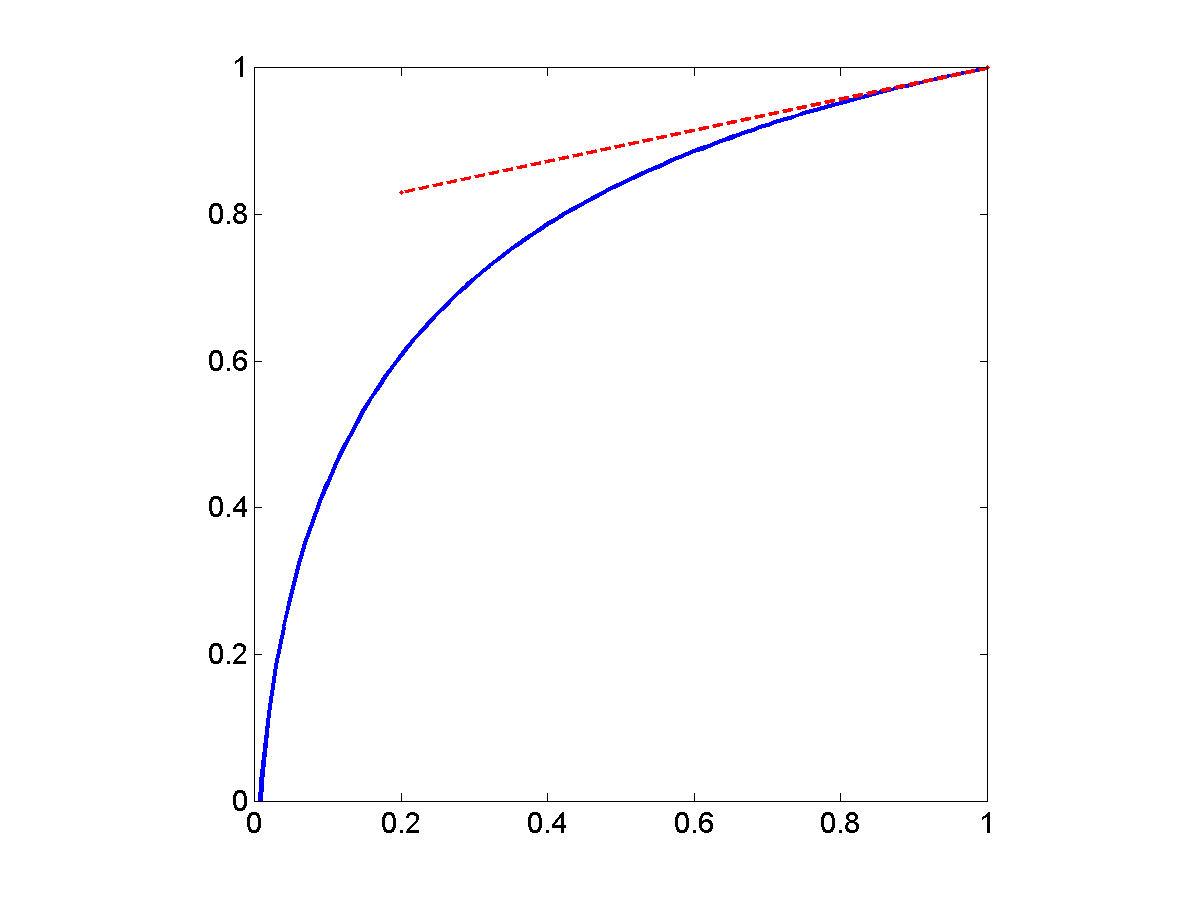}
%\end{center}
\caption{$\ks(F|H)$ is the slope of the optimal receiver operating
characteristic for testing $H_0: X \sim H$ versus $H_1: X \sim F$ at its
right endpoint.
\label{fig:roc}}
\end{figure}

Motivated by this idea, we suggest the following practical algorithm for
MPE. First, split each of the two samples \eqref{eqn:contam0} and
\eqref{eqn:contam1} into two portions according to a common ratio.
Using the first portion of
each data set, run a universally consistent classification algorithm that
yields a full ROC. In our implementation, we run kernel logistic
regression (KLR) with a Gaussian kernel, and vary the threshold on the
posterior probability estimate to obtain an ROC. Note that KLR is run on
the contaminated data. The bandwidth and regularization parameters of KLR
are set using cross-validation.

Using the second half of each sample, construct conservative estimates (as in
Eqn. \eqref{eqn:khat3}) of the ROC for a discrete set of thresholds on the
KLR posterior probability function. To obtain these conservative estimates,
we do not use the empirical error plus or minus a VC bound. Instead, we use
direct binomial tail inversion (also known as one-sided exact Clopper-Pearson
confidence interval), which is the tightest possible deviation
bound for a binomial random variable \citep{langford05tut}. Using these
conservative estimates, we then compute the minimum slope of all line
segments joining points on the ROC to the point $(1,1)$.

We also considered an alternative approach to estimating the label noise
proportions, based on class probability estimation as discussed in Section
\ref{sec:cpe}. As in the preceding estimator, we split each sample into
two portions, and used the first portion of each sample to train a KLR
estimate of the class probability function $\etat$. We then used the
second portion of each sample to estimate the minimum and maximum values
of $\etat$, which we then plugged into the formulas
\eqref{eqn:pi0cpe}-\eqref{eqn:pi1cpe} to obtain estimates of $\pi_0$ and
$\pi_1$. To obtain some robustness to outliers, we estimated the maximum
and minimum using the 99th and 1st percentiles, respectively, as suggested
by \cite{menon15icml}.

The estimates based on the ROC method are denoted $\pnhatroc$ and
$\pphatroc$, while the estimates based on class probability estimation are
denoted $\pnhatcpe$ and $\pphatcpe$. The former estimates are based on a
20/80 split of each sample, and the latter on a 80/20 split, as these seemed
to give the best results. The latter ratio was also employed by
\citet{menon15icml}. A detailed Matlab implementation, which reproduces our
results, can be downloaded from {\tt http://web.eecs.umich.edu/$\sim$cscott}.

\section{Experiments}
\label{sec:exp}

To study the performance of the above estimators, we examined the problem
of classification with label noise using three data sets. The waveform
data set is available from the UCI Repository, and consists of three
classes of synthetically generated waveforms. The classes are overlapping,
as the Bayes risk for this data set is known to be around 10 \%. We
generated data for a binary classification problem (using only two of the
classes) with label noise proportions $\pi_0$ and $\pi_1$ specified as in
Table \ref{tab:ln}. Sample sizes of $n_0 = n_1 = 1000$ were chosen. We
also used the MNIST handwritten digits data set, digits 3 and 8, with a
similar setup as to the waveform data. In this case the sample sizes were
$n_0 = n_1 = 2000$.

A third data set comes from nuclear particle classification, where the
training data are realistically described by the label noise model. The
data are obtained from organic scintillation detectors, which detect both
gamma-rays and neutrons, and associate every detected particle with a
digitally sampled pulse-shaped waveform \citep{adams78}. The goal is to
classify gamma-ray pulses (class 0) from neutron pulses (class 1).
See discussion in
Section~\ref{sec:motiv}. Training data were obtained by measuring particles
emitted from a Cf-252 source, which undergoes spontaneous decay and emits
both neutrons and gamma rays. Data were preprocessed by aligning pulse
peaks and by eliminating signals with multiple peaks (corresponding to
multiple detected events within a single observation window). Through a
special experimental configuration \citep{ambers11}, the time of flight
(TOF) for each particle hitting the detector was also measured. Since
neutrons travel more slowly than gamma-rays, this gives noisy labels by
looking only at those particles with TOF in a certain window. Gamma-rays
travel at the speed of light, so a data set with mostly gamma-ray pulses
was obtained by focusing on those particles with TOFs around the speed of
light (TOF $<$ 5 ns). However, neutrons can still have TOFs in this window
because they were generated from either a background event or from another
fission event that occurred just an instant before the one being measured.
A neutron TOF-window was also selected (45 $<$ TOF $<$ 55 ns), and as with
the other window, this one will also contain some proportion of gamma-ray
pulses. We obtained samples of size $n_0 = n_1 = 3000$ from each window.
It is important to keep in mind that in this application, the ground truth
$\pi_0$ and $\pi_1$ are unknown, and it can only be assessed whether our
estimates of these quantities are reasonable based on physics knowledge.

The results are reported in Table \ref{tab:ln}. Regarding the ROC method,
the results indicate that this method provides reasonably accurate
estimates of the label noise proportions in the four experimental settings
where the true proportions are known. These results also suggest that
mutual irreducibility can be a reasonable assumption in practice. In the
nuclear particle classification problem, although ground truth labels are
unavailable, the proportions estimated by the ROC method are at least
consistent with the expectation that noisy labels should be relatively
rare (given the high rate of Cf-252 fission events relative to the
expected rate of background events), and also with the knowledge that
neutrons are rarer background events than gamma-rays (i.e., $\pi_0 <
\pi_1$).

\begin{table}
\centering
\begin{tabular}{|l|c|c|c|c|c|c|}
\hline
data set & $\pi_0$ & $\pi_1$ & $\pnhatroc$ & $\pphatroc$ & $\pnhatcpe$ &
$\pphatcpe$ \\
\hline
waveform & 0.1 & 0.25 & 0.0979 & 0.2792 & 0.1919 & 0.1393 \\
\hline
waveform & 0.15& 0.05 & 0.1437 & 0.0589 & 0.0369 & 0.0831\\
\hline
digits & 0.1 & 0.25 & 0.1325 & 0.2573 & 0.1065 & 0.0555 \\
\hline
digits & 0.15 & 0.05 & 0.1633 & 0.0597 & 0.0191 & 0.0479 \\
\hline
nuclear & N/A & N/A & 0.0100 & 0.0641 & 0.0151 & 0.0007\\
\hline
\end{tabular}
\caption{Results for mixture proportion estimation as applied to
classification with label noise. \label{tab:ln}}
\end{table}

With regards to the CPE method, the results indicate that the method is
sometime accurate, but other times incurs considerable error. We also note
that for the nuclear data, $\pi_1$ is estimated to be smaller than $\pi_0$,
which is inconsistent with the knowledge that contaminating neutrons are more
rare than contaminating gamma-rays. To further investigate this issue, we
formed Table~\ref{tab:cpe}. The first two columns are the same as in the
previous table, restricted to the waveform and digits data for which ground
truth is known. The third and fourth columns show the empirical percentiles
of the ground truth values of $\etatmin$ and $\etatmax$, which should ideally
be near 0 and 1. 

We see that our implementation of the CPE estimator can be both 
conservative (estimating more noise than is actually present), and overly 
optimistic (estimating less noise than is present). Indeed, percentiles 
far from 0 or 1 reflect over optimism. On the other hand, percentiles of 
exactly 0 and 1 (of which there is one instance in Table~\ref{tab:cpe}) 
are quite likely signs of conservitism. In this case, the empirical values 
of $\etat$ do not cover the full range $[\etatmin,\etatmax]$.

CPE-based estimators were also studied by \citet{liutao16,menon15icml},
who report more favorable results for this method. We use KLR to estimate
the class probabilities, whereas they employ different techniques. Given
this discrepancy in findings, the issue warrants further investigation.
There are two factors that may favor the ROC-method. First, the ROC method
employs uncertainty quantification (on the deviation between true and 
empirical probabilities) in the form of direct binomial tail
inversion when estimating the slope of the ROC at its right endpoint.
Similar uncertainty quantification would likely benefit the CPE method and
make it less overly optimistic. Second, the ROC method leverages the shape
constraint that it is typically concave.

\begin{table}
\centering
\begin{tabular}{|l|c|c|c|c|}
\hline
data set & $\pi_0$ & $\pi_1$ & $\etatmin$ \%ile & $\etatmax$ \%ile \\
\hline
waveform & 0.1 & 0.25 & 0 & 0.79 \\
\hline
waveform & 0.15& 0.05 & 0.28 & 0.99 \\
\hline
digits & 0.1 & 0.25 & 0.07 & 0.70\\
\hline
digits & 0.15 & 0.05 & 0.31 & 0.93 \\
\hline
\end{tabular}
\caption{Percentiles of the true $\etatmin$ and $\etatmax$ for those
experiments with ground truth. The 3rd and 4th columns should ideally be
zero and 1.
\label{tab:cpe}}
\end{table}

To illustrate the importance of accounting for label noise, we further
examine nuclear particle classification. As noted in Section
\ref{sec:challenge}, training a classifier on contaminated training data
generates the same ROC as training with uncontaminated data, and the real
impact of accounting for label noise occurs in performance evaluation. In
Fig. \ref{fig:corrected_roc}, the solid curve plots the ROC for the
nuclear particle data, using contaminated test data to estimate the false
positive and true positive rates. The dotted curve relies on Eqns.
\eqref{eq:noisyz2}-\eqref{eq:noisyo2} to correct these probabilities,
revealing that the classifier actually classifies the particles much more
accurately than one would expect if label noise was not accounted for.
This makes intuitive sense, because many of the particles from the
contaminated test data that appear to be incorrectly classified are
actually correctly classified, and just have erroneous labels.

\begin{figure}
\centering
\includegraphics[width=\textwidth]{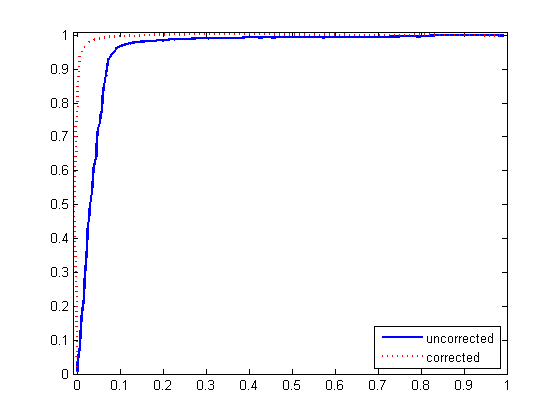}
%\end{center}
\caption{The ROC for the nuclear classification problem, where the solid
curve plots the uncorrected errors $(\Rtn, 1-\Rtp)$, and the dotted curve
plots the corrected errors $(R_0, 1-R_1)$ which account for the presence
of label noise in the test data.
\label{fig:corrected_roc}}
\end{figure}

\section{Conclusion}
\label{sec:conclusion}

We argue that consistent classification with label noise is possible
if a majority of the labels are correct on average, and the
class-conditional distributions $P_0$ and $P_1$ are mutually irreducible.
Under these conditions, we leverage results of \cite{blanchard10} on
mixture proportion estimation to design consistent estimators of the noise
proportions. These estimators are applied to establish a consistent
discrimination rule based on surrogate loss minimization, although other
performance measures could be analyzed similarly.  Unlike previous
theoretical work on this problem, we handle the cases where the supports
of $P_0$ and $P_1$ may overlap or even be equal, and the noise proportions
are asymmetric and unknown.

We also argue that mutual irreducibility is necessary if we require the
decontamination operation at population level to satisfy some natural
conditions (universality, symmetry, continuity and stability.)
Additionally, requiring mutual irreducibility can be equivalently seen as
aiming at maximum denoising of the contaminated distributions, or maximum
separation of the unknown sources $P_0,P_1$ for given contaminated
distributions. Thus, our discrimination rule is universally consistent in
the sense that its performance tends to the optimal performance
corresponding to the maximally denoised $P_0, P_1$, regardless of $\Ptn,
\Ptp$.

Finally, we investigate two practical implementations of MPE, one based on
the ROC for the contaminated data, and the other based on class probability
estimation for the contaminated data. The ROC method exhibits good accuracy
in the label noise setting on three different data sets, including the
nuclear particle classification problem that originally motivated this work.
Our CPE implementation, on the other hand, still requires further
development.

\section*{Acknowledgements}

G. Blanchard was supported in part by the European Community's 7th
Framework Programme under the E.U. grant agreement 247022 (MASH Project),
and by the DFG grant FOR-1735 (Structural inference in statistics).
C. Scott was supported in part by NSF Grants 0953135, 1047871, and
1217880.

\appendix

\section{Mixture Proportion Consistency Result}
\label{app:mpe}

\citet{blanchard10} establish strong consistency of $\widehat{\kappa}$,
that is, convergence almost surely, although the statement of that
consistency result requires a slight correction. In particular, it is
necessary to additionally assume that $\log \max(n_0,n_1) =
o(\min(n_0,n_1))$ for the argument to hold. Although the focus of that
work is almost sure convergence, the proof can be easily modified to
establish convergence in probability, and for that type of convergence,
the aforementioned qualification on the growth of the sample sizes is not
necessary. Since the present work focuses on convergence in probability,
our results also require no additional qualification. See
\cite{scottWSLnotes} for additional details.

\section{Remaining Proofs}

\subsection{Proof of Proposition~\ref{prop:p1}}
\begin{proof}
First note that under {\bf (A)}, $\lam$ is well-defined and nonnegative.
Solving for $\g$ we obtain
$$
\g = \frac{\lam (1-\pn) - \pp}{1 - \pp - \lam \pn}.
$$
The denominator in this expression is positive, which can be seen as follows.
\begin{align*}
\lam & = \frac{\pp + \g (1-\pp)}{1 - \pn + \g \pn} \\
& < \frac{1 - \pn + \g (1-\pp)}{1 - \pn + \g \pn} \\
& < \frac{\g (1-\pp)}{\g \pn} \\
& = \frac{1-\pp}{\pn}.
\end{align*}
The first inequality follows from {\bf (A)}, while the second follows from the fact that the mapping $t \mapsto (a + t)/(b + t)$ is strictly decreasing in $t \ge 0$ when $a > b$.  Here $a = \g (1-\pp)$ and $b = \g \pn$.

Therefore,
\begin{align*}
\frac{p_1(x)}{p_0(x)} > \g & \iff \frac{p_1(x)}{p_0(x)} > \frac{\lam
(1-\pn) - \pp}{1 - \pp - \lam \pn} \\
& \iff [1 - \pp - \lam \pn] p_1(x) > [\lam (1-\pn) - \pp] p_0(x) \\
& \iff (1-\pp) p_1(x) + \pp p_0(x) > \lam [(1-\pn) p_0(x) + \pn p_1(x)] \\
& \iff \frac{\htp(x)}{\htn(x)} > \lam.
\end{align*}
\end{proof}

\subsection{Proof of Theorem~\ref{thm:necessity}}

\begin{proof}
%Let $(\Ptn,\Ptp) \in \fPd$ be arbitrary and $(\pi_0,\pi_1,P_0,P_1)$
%be the corresponding unique mutually irreducible condition.
Let $(P_0,P_1)$ be mutually irreducible and fixed for the rest of the proof.
Observe that if conditions (i) and (ii) are satisfied, $(P_0,P_1)$ must be a
$\phi$-source. Namely, by (i) $(P_0,P_1)$ belongs to the domain of $\phi$;
and since $(P_0,P_1)$ are mutually irreducible,
%the only solutions of \eqref{eqn:contam0}-\eqref{eqn:contam1}
the only possible values for $\phi(P_0,P_1)$
are $(0,0,P_0,P_1)$ and $(1,1,P_1,P_0)$.
In any case, by the symmetry condition (ii), $(P_0,P_1)$
and $(P_1,P_0)$ are both $\phi$-sources. Finally, by stability condition (iv),
it must be the case that $\phi(P_0,P_1) = (0,0,P_0,P_1)$.

Let us now denote $\eps^*$ the supremum of values $\eps$ such that,
for all $(\pi_0,\pi_1) \in [0,1]^2 \text{ with } \pi_0 + \pi_1 < \eps$,
%\qquad \phi(\psi(\pi_0,\pi_1,P_0,P_1)) = (\pi_0,\pi_1,P_0,P_1).
\eqref{eq:stab} is satisfied. Condition (iv) implies $\eps^*>0$.
If $\eps^*=1$, this means that $\phi$ returns the mutually irreducible solution
for mutual contamination with arbitrary contamination weights of $(P_0,P_1)$.

We now consider the case where $\eps^*<1$ and will come to a contradiction.
%Let $\eps_n$ be a sequence in $(\eps^*,1)$ converging to $\eps^*$.
Fix arbitrarily $\eps \in (\eps^*,1)$.
By definition of $\eps^*$, there exists $(\pi_0,\pi_1)$ such
that $\eps^* < \pi_0 + \pi_1<\eps$ and \eqref{eq:stab}
is not satisfied.

Let $(\nu_0,\nu_1,P^\eps_0,P^\eps_1)
= \phi(\psi(\pi_0,\pi_1,P_0,P_1))$ be the contamination proportions
and sources identified by $\phi$ for the contamination
$\psi(\pi_0,\pi_1,P_0,P_1)$. Since \eqref{eq:stab} is not satisfied,
and the identified sources uniquely determine the associated contamination
proportions, it holds that $(P^\eps_0,P^\eps_1)$ is a $\phi$-source
distinct from $(P_0,P_1)$. Finally let us denote $(\eta_0,\eta_1)$
the contamination weights of $(P^\eps_0,P^\eps_1)$ in its mutually irreducible
decontamination in terms of $(P_0,P_1)$. (Observe
that $P^\eps_0$ and $P^\eps_1$ both belong to the convex hull
of $P_0$ and $P_1$; this implies that $(P^\eps_0,P^\eps_1)$
decontaminate irreducibly either to
$(P_0,P_1)$ or to $(P_1,P_0)$. We assume for now the former case
and will come back to the latter case later.)

It must hold that $\eta_0+\eta_1\geq \eps^*$, otherwise we would
have (by definition of $\eps^*$) $\phi(P^\eps_0,P^\eps_1) =
(\eta_0,\eta_1,P_0,P_1) = \phi(\psi(0,0,P^\eps_0,P^\eps_1))$,
contradicting (iv) for the source $(P^\eps_0,P^\eps_1)$.
Moreover, straightforward computations and coefficient
identification in the unique representation in terms of
$(P_0,P_1)$ lead to the relations
\[
(\nu_0,\nu_1) = \paren{ \frac{\pi_0 - \eta_0}{1-
(\eta_0+\eta_1)}, \frac{\pi_1 - \eta_1}{1-
(\eta_0+\eta_1)}}.
\]
It follows that
\[
\nu_0+\nu_1 = 1- \frac{1-(\pi_0+\pi_1)}{1-(\eta_0+\eta_1)}
\leq 1 - \frac{1-\eps}{1-\eps^*}.
\]
In the case where $(P^\eps_0,P^\eps_1)$ decomposes irreducibly to $(P_1,P_0)$,
the first equality above still holds when replacing $\eta_i$ by $(1-\eta_i)$.
We deduce that in that case $\nu_0+\nu_1> 1$.

Now consider a sequence $\eps_n \searrow \eps^*$, and the associated
sequences $(\pi_0^{(n)},\pi_1^{(n)})$ and $(\eta_0^{(n)},\eta_1^{(n)})$
constructed
as above. By compactness, we can extract a subsequence so that
$(\pi_0^{(n)},\pi_1^{(n)})$ converges to some $(\pi_0^*,\pi_1^*)$.
Then by construction $\pi_0^*+\pi_1^*=\eps^* \in (0,1).$ On the other hand,
for all $n$ either $\nu_0^{(n)}+\nu_1^{(n)}\leq 1 - \frac{1-\eps_n}{1-\eps^*}$
(which gets arbitrarily close to 0) or $\nu_0^{(n)}+\nu_1^{(n)}\geq 1$.
This contradicts the continuity assumption (iii) at point $(\pi_0^*,\pi_1^*)$,
since by definition of $\eps^*$ and (iii), it should hold
$\phi_\pi(\psi(\pi_0^*,\pi_1^*,P_0,P_1)) = (\pi_0^*,\pi_1^*)$ and
thus we should have $\nu_0^{(n)}+\nu_1^{(n)} \rightarrow \eps^*$.

%[To be completed\ldots]

Conversely, if $\phi$ is the mutually irreducible decontamination
operator, it satisfies (i)-(iv), and is therefore the only decontamination
operator having these properties.
\end{proof}

\subsection{Proof of Theorem~\ref{thm:cplt}}
\begin{proof}
By Lemmas~\ref{le:le1} and~\ref{le:le1conv}, feasible quadruples
$(\pn,\pp,P_0,P_1)$ for decompositions
\eqref{eqn:contam0}-\eqref{eqn:contam1}\, under
condition {\bf (A)} are in one-to-one correspondence with feasible
quadruples $(\ptn,\ptp,P_0,P_1)$ for decompositions
\eqref{eqn:ssnd0}-\eqref{eqn:ssnd1}\,.

%and any quadruple satisfying \mutual\, for the original formulation is in correspondence
%with a quadruple satisfying \irreducible\, for the transformed (decoupled) formulation
%(and vice-versa).

Define $\ptn^* := \ks(\Ptp|\Ptn)$.
Proposition \ref{prop:canondecmp}\, applied to \eqref{eqn:ssnd0}\,
easily implies that
for any value $\ptn \in [0, \ptn^*]$\,, there exists a unique
$P_0$ such that $(\ptn,P_0)$ satisfies \eqref{eqn:ssnd0}; also,
the solution $(\ptn^*, P_0^*)$ corresponding to the maximal feasible value
of $\ptn$ is the unique one satisfying \irreducible.
A similar conclusion is valid concerning solutions of \eqref{eqn:ssnd1}.

Therefore, the feasible region $R$ for proportions $(\pn,\pp)$ in the
original model \eqref{eqn:contam0}-\eqref{eqn:contam1}\, is obtained as
the image of the rectangle $[0,\ptn^*]\times[0,\ptp^*]$ via the above
one-to-one correspondence. Using the explicit expression for $(\ptp,\ptn)$
of Lemma~\ref{le:le1}, the constraints \eqref{eqn:feas} simply translate
the equivalent constraints $\ptn\leq \ptn^*,\, \ptp \leq \ptp^*$.

Since by Lemma~\ref{le:le2}\,, under the assumption {\bf (A)} conditions
\irreducible\, and \mutual\, are equivalent, then again via the above
correspondence, we get existence and unicity of
$(\pn^*,\pp^*,P_0^*,P_1^*)$ for the original formulation
\eqref{eqn:contam0}-\eqref{eqn:contam1}, under condition \mutual. The
explicit expression \eqref{eqn:expl} for $(\pn^*,\pp^*)$ is obtained
via Lemma~\ref{le:le1conv}.

%Moreover, Proposition \ref{prop:canondecmp}\, states that the above $(\ptp^*,\ptn^*)$ are the
%largest possible values of $(\ptp,\ptn)$ over the set of feasible
%quadruples $(\ptn,\ptp,P_0,P_1)$ for decompositions \ref{eqn:ssnd0}-\ref{eqn:ssnd1}\,.

The equality $\pn+\pp=1-\frac{(1-\ptp)(1-\ptn)}{1-\ptp\ptn}$
implies that $\pn+\pp$ is a monotone (strictly) increasing function of
$\ptp$ and $\ptn$. Therefore, the maximum of $\pn+\pp$ can only be
reached when both $(\ptp,\ptn)$ take their maximum value. Since the latter
values are attained for the unique feasible quadruple
$(\ptn^*,\ptp^*,P_0^*,P_1^*)$ in the decoupled problem, the corresponding
maximum of $\pn+\pp$ for the original formulation is also uniquely
attained for the quadruple $(\pn^*,\pp^*,P_0^*,P_1^*)$.

Finally, by subtracting \eqref{eqn:contam0} from \eqref{eqn:contam1}, we obtain the relation
\[
(P_1-P_0) = (1 - \pn- \pp)^{-1} (\Ptp - \Ptn)\,
\]
%\text{ implying }
implying
\[
\norm{P_1 - P_0}_{TV} =  (1 - \pn - \pp)^{-1} \norm{\Ptp - \Ptn}_{TV}.
\]
Therefore, the maximum (over $\Lambda$) of the total variation distance $\norm{P_1 - P_0}_{TV}$
is precisely attained for the maximum value of $(\pn + \pp)$, and hence corresponds to the unique
mutually irreducible solution.
\end{proof}

\subsection{Alternate Proof of Density Ratio Formula for $\ks$}

\begin{prop}
Assume that the ROC of the likelihood ratio tests $x \mapsto
\ind{f(x)/h(x) > \gamma}$ is left-differentiable at $(1,1)$. Then
$\ks(F|H)$ is the slope (left-derivative) of the ROC at $(1,1)$.
\end{prop}
\begin{proof}
The slope of the ROC of an LRT with threshold $\g$ is equal to $\g$
wherever the slope is well defined \citep{birdsall54,scharf91}. The right
end-point of the ROC
corresponds to $\g^* = \essinf_{x  \in \supp(H)} \frac{f(x)}{h(x)}$. That
is, for all $\g > \g^*$, the Type I error of the LRT is strictly less than
1, whereas it equals 1 at $\g^*$.
\end{proof}

\subsection{Proof of Theorem \ref{thm:mperate}}

We begin by establishing \eqref{eqn:mperate} without the absolute value, which
is the more challenging direction. The reverse direction will follow easily by
the first part of Theorem \ref{thm:consist}.

By {\bf (D)}, there exists a distribution $G$ and $\gamma \in [0,1]$ such
that $F= (1-\gamma)G + \gamma
H$ and $\supp(H) \not\subset \supp(G)$. Then $G$ is irreducible with respect to
$H$, and Corollary \ref{cor:irrd} implies that $\gamma = \ks$. By {\bf
(AP2)}, there exists $j \ge 1$ and $S \in \sS_j$ such that $G(S) = 0$ and
$H(S)
> 0$. But then
$$
\frac{F(S)}{H(S)} = (1-\gamma)\frac{G(S)}{H(S)} + \gamma = \ks.
$$
By the VC inequality and union bound, we have that with
probability at least $1-2(\frac1{n_0}+ \frac1{n_1})$,
$$
\khat \le \frac{F(S) + 2\eps_1(j,j^{-2}/n_1)}{(H(S) -
2\eps_0(j,j^{-2}/n_0))_+} \le
\frac{F(S) + \eps}{(H(S) - \eps)_+}
$$
where $\eps := 2(\eps_1(j,j^{-2}/n_1) + \eps_0(j,j^{-2}/n_0))$. Now let $\nu$ be
such that $\eps = \frac{\nu}{1+\nu}H(S)$, which is achieved by $\nu =
\frac{\eps}{H(S)-\eps}$. Let $N$ be such that $n_0, n_1 \ge N$ implies $\eps
\le \frac12 H(S)$. Then, for $n_0, n_1 \ge N$ and with probability at least
$1-2(\frac1{n_0}+ \frac1{n_1})$,
\begin{align*}
\khat &\le (1+\nu)\frac{F(S)+\eps}{H(S)} \\
&= (1+\nu)\ks + \nu \\
&\le \ks + 2 \nu \\
&\le \ks + \frac{4}{H(S)} \eps.
\end{align*}
This establishes the existence of a constant $C$ such that for $n_0, n_1 \ge N$,
$$
\Pr \left( \khat - \ks \ge C \left[\sqrt{\frac{\log n_0}{n_0}} +
\sqrt{\frac{\log n_1}{n_1}} \right] \right) \le \frac2{n_0} + \frac2{n_1}.
$$
The same inequality holds with the absolute value by the first part of Theorem
\ref{thm:consist}, which holds on the same event (samples where the VC bounds
hold for
all $k \ge 1$) as was used to establish the above inequality.

\subsection{Proof of Theorem \ref{thm:discrim}}
%\red{To check: notation clash for $L_0$, used elsewhere as the function
%$L(0,t)$}
By Corollary \ref{prop:regret}, it suffices to show $R_{\Pt,L_\a}(\fhat) -
R_{\Pt,L_\a}^* \to 0$ in probability. Toward this end we employ Rademacher
complexity analysis. In particular, we will leverage
the following result.
\begin{thm}
Let $Z, Z_1, \ldots, Z_n$ be iid random variables taking values in a set
$\sZ$. Let $\sigma_1, \ldots, \sigma_n$ be iid Rademacher random
variables, independent of  $Z, Z_1, \ldots, Z_n$. Consider a set of
functions $\mathcal{G} \subseteq
[a,b]^{\mathcal{Z}}$.
$\forall \delta>0$, with probability $\ge 1-\delta$ with respect
to the draw of $Z_1, \ldots, Z_n$, we have
\begin{align}
\label{RC1}
\forall g \in \mathcal{G}, \ \left| \mathbb{E}[g(Z)] -
\frac{1}{n}\sum_{i=1}^n
g(Z_i) \right| \le 2 \mathfrak{R}_n(\mathcal{G}) + (b-a) \sqrt{\frac{\log
2/\delta}{2n}},
\end{align}
where
$$
\mathfrak{R}_n(\mathcal{G}) = \bbE_{\stackrel{Z_1, \ldots,
Z_n}{\sigma_1, \ldots, \sigma_n}} \left[ \sup_{g \in
\mathcal{G}} \frac{1}{n} \sum_{i=1}^n \sigma_i g(Z_i)\right]
$$
is the Rademacher complexity of $\mathcal{G}$.
\end{thm}
\noindent A proof may be found in \citet[Thm. 3.1]{mohri12}.

For any $f \in \sH$ and loss $L'$, denote
the empirical $L'$-risk
$$
\wR_{L'}(f) := \frac1{n} \sum_{i=1}^n L'(\Yt_i, f(X_i)),
$$
and denote the objective function $J(f): = \wR_{L_{\wa}}(f) + \lambda_n \|f\|^2$.
Also
define $C_0 := \max\{L(0,0),L(1,0)\}$. Observe that $J(\fhat) \leq J(0)
\leq
C_0$. Therefore $\lambda_n
\|\fhat\|^2 \leq C_0 - \wR_{L_{\wa}}(\fhat)\leq C_0$, and we deduce that
$\fhat
\in
B_{\sH}(M_n)$, the ball of radius $M_n$ in $\sH$, where $M_n:=
\sqrt{C_0/\lambda_n}$.

Let $\eps > 0$, and let $f_\eps \in \sH$ be such that
$R_{\Pt, L_\a}(f_\eps) < R_{\Pt,L_\a}^* + \frac{\epsilon}{2}$, which is possible
since the
the reproducing kernel associated with $\sH$ is universal
\citep{steinwart08}.
Then
\begin{align*}
R_{\Pt, L_{\a}}(\fhat) - R_{\Pt,L_{\a}}(f_\eps)
&= R_{\Pt, L_{\a}}(\fhat) - \wR_{L_{\a}}(\fhat) \\ % \label{d1} \\
&+ \wR_{L_{\a}}(\fhat) - \wR_{L_{\wa}}(\fhat) \\ %\label{d2} \\
&+ \wR_{L_{\wa}}(\fhat) - \wR_{L_{\wa}}(f_\eps) \\ % \label{d3} \\
&+ \wR_{L_{\wa}}(f_\eps) - \wR_{L_{\a}}(f_\eps) \\ % \label{d4} \\
&+ \wR_{L_{\a}}(f_\eps) - R_{\Pt,L_{\a}}(f_\eps). \\ % \label{d5}
\end{align*}
The first term
can be bounded, with probability at least $1-1/n$, by
$$
\frac{2DBM_n}{\sqrt{n}} + (C_0 + DBM_n) \sqrt{\frac{\ln 2n}{2n}}
$$
%\red{Check: that the second factor should not be $2(L_0+DBM_n)$ instead
%  of $2BM_n$ ??}
using the Rademacher complexity bound applied to the class of functions
$\sG = \{(x,\tilde{y}) \mapsto L(\tilde{y},f(x)), f \in B_{\sH}(M_n)\}$,
where
$B_{\sH}(M_n)$
is
the ball of radius $M_n$ (centered at the origin) in $\sH$. By the
Lipschitz composition property of Rademacher complexity
\citep[Lemma 4.2]{mohri12}, $\mathfrak{R}_n({\sG}) \le D
\mathfrak{R}_n(B_{\sH}(M_n))$.
The
Rademacher complexity of $B_{\sH}(M_n)$ is further bounded by $B M_n /
\sqrt{n}$ \citep[Thm 5.5]{mohri12}, which gives the first term on the RHS.
The
second term comes
from the observation that functions in $\sG$ have ranges confined to
$[0,C_0 + DBM_n]$. To see this, recall that losses are by definition
nonnegative, that $L$ is Lipschitz in its second argument, and that for
any $f \in B_{\sH}(M_n)$, we have $\|f \|_\infty = \sup_{x \in \sX}
|\langle
f, k(\cdot,x) \rangle | \le B M_n$ by the reproducing property and
Cauchy-Schwarz.

The fifth term is bounded similarly, with the only additional
observation being that $f_\eps \in B_{\sH}(M_n)$ for $n$ sufficiently
large.

The middle term can be bounded by $\lambda_n \| f_\eps \|^2$, which tends to zero
as $n \to \infty$. This follows from the definition of $\fhat$, since $J(\fhat)
\le J(f_\eps)$ implies $\wR_{L_{\wa}}(\fhat) - \wR_{L_{\wa}}(f_\eps) \le
\lambda_n \| f_\eps \|^2 -
\lambda_n \| \fhat \|^2 \le \lambda_n \| f_\eps \|^2$.

To bound the second term, observe that for any $f \in B_{\sH}(M_n)$,
\begin{align*}
\wR_{L_{\a}}(f) - \wR_{L_{\wa}}(f)
&= \frac1{n} \Bigg[ \sum_{i:\Yt_i=1}  (\wa - \a) L(1,f(X_i)) \\
& \ \ \ \ \ \ \ \ +
\sum_{i:\Yt_i=0}  (\a - \wa) L(0,f(X_i)) \Bigg] \\
&\le |\wa - \a| \sup_{x,\tilde y} L(\tilde y,f(x)) \\
&\le |\wa - \a| \left( C_0 + D \| f \|_\infty \right),
\end{align*}
where $D$ is the Lipschitz constant of $L$. By Cauchy-Schwarz and the
reproducing property,
$$
\| f \|_\infty = \sup_x |\langle f, k(\cdot,x)\rangle| \le \|f\|_{\sH} B
$$
where $B$ is the bound on the kernel. Now $\|f\|_{\sH} \le
\sqrt{\frac{C_0}{\lambda_n}}$, and so for the second term to go to zero,
we need
$|\wa - \a|/\lambda_n$ to go to zero. Under {\bf (A')} and {\bf (C')}, we
know that $|\wa - \a|$ converges at a rate of $\sqrt{\frac{\log n}{n}}$, and by
our assumption on the rate of decay of $\lambda_n$, $|\wa - \a|/\lambda_n$
tends to zero as $n \to \infty$, except on a vanishingly small event.

The fourth term is handled in a similar manner, where again we observe that
$f_\eps \in B_{\sH}(M_n)$ for $n$ sufficiently large.

In summary, we have shown that $R_{\Pt, L_{\a}}(\fhat) - R_{\Pt,L_{\a}}^* \le
\eps$ with probability tending to one as $n$ (and with it $n_0$ and $n_1$) tends
to infinity. This concludes the proof.

\subsection{Proof of Theorem \ref{thm:clippable}}

We start by establishing that $L_\alpha$ is $T$-clippable and its clipped
version is Lipschitz and bounded with constants independent of $\alpha \in (0,1)$.
The loss $L$ being $T$-clippable implies by definition that both
$L_0(t)=L(0,t)$ and $L_1(t) = L(1,t)$ are clippable. Therefore, $L_\a(y,t) = (1-\a)
\ind{y=1} L_1(t) + \a \ind{y=0} L_0(t)$ is $T$-clippable
(regardless of $\alpha \in (0,1)$.) Denote
\[
\wt{L}_\a(y,t) := L_\a(y, \clip_T(t)) =
(1-\a) \ind{y=1} L_1(\clip_T(t)) + \a \ind{y=0} L_0(\clip_T(t))\,,
\]
and define $C_0 := \max\{L(0,0),L(1,0)\}$\,.
Since $L$ is assumed Lipschitz with constant $D$, both $L_1$ and $L_0$ are
Lipschitz and since $\clip_T$ is 1-Lipschitz, by composition $\wt{L}_\a$ is also
a Lipschitz loss (with the same constant $D$, regardless of $\a \in (0,1)$.)
Furthermore, since $\clip_T(t)\in [-T,T]$\,, we have for all $(y,t)$ and $\a$:
\[
\abs{\wt{L}_\a (y,t)} \leq \max_{t \in [-T,T]} \max \paren{ L_0(t),L_1(t)} \leq C_0 + DT\,.
\]
%where $T'$ is finite since $L_0,L_1$ are continuous functions.

We proceed to proving the main claim. By Corollary \ref{prop:regret},
it suffices to show $R_{\Pt,L_\a}(\cfhat) - R_{\Pt,L_\a}^* \to 0$ in probability.
For any $f$ and loss $L'$, denote by $\wR_{L'}(f)$ the empirical $L'$-risk
of $f$.
Denote the objective function $J(f): = \wR_{L_{\wa}}(f) + \lambda_n
\|f\|^2$.
Observe that $J(\fhat) \leq J(0) \leq
C_0$. Therefore $\lambda_n
\|\fhat\|^2 \leq C_0 - \wR_{L_{\wa}}(\fhat)\leq C_0$, and we deduce that $\fhat
\in
B_{\sH}(M_n)$, the ball of radius $M_n$ in $\sH$, where $M_n:=
\sqrt{C_0/\lambda_n}$.

Let $\eps > 0$, and let $f_\eps \in \sH$ be such that
$R_{\Pt, L_\a}(f_\eps) < R_{\Pt,L_\a}^* + \frac{\epsilon}{2}$, which is possible
since the
the reproducing kernel associated with $\sH$ is universal
\citep{steinwart08}.
We have
\begin{align*}
  R_{\Pt, L_{\a}}(\cfhat) - R_{\Pt,L_{\a}}(f_\eps)
& = R_{\Pt, \wt{L}_{\a}}(\fhat) - R_{\Pt,L_{\a}}(f_\eps) \\
  & = R_{\Pt, \wt{L}_{\a}}(\fhat) - \wR_{\wt{L}_{\a}}(\fhat)\\
  & \;\;\; + \wR_{\wt{L}_{\a}}(\fhat) - \wR_{\wt{L}_{\wa}}(\fhat)\\
  & \;\;\; + \wR_{\wt{L}_{\wa}}(\fhat) - \wR_{L_{\wa}}(\fhat)\\
& \;\;\; + \wR_{L_{\wa}}(\fhat) - \wR_{L_{\wa}}(f_\eps) \\ % \label{d3} \\
& \;\;\; + \wR_{L_{\wa}}(f_\eps) - \wR_{L_{\a}}(f_\eps) \\ % \label{d4} \\
& \;\;\; + \wR_{L_{\a}}(f_\eps) - R_{\Pt,L_{\a}}(f_\eps). \\ % \label{d5}
\end{align*}
The first and last terms can be bounded, with probability at least $1-1/n$, by
$$
\frac{2DBM_n}{\sqrt{n}} + (C_0+D\max(T,B\norm{f_\eps}_\infty)) \sqrt{\frac{\ln 4n}{2n}}
$$
using Rademacher complexity analysis as was done in the preceding proof.
Here $D$ is the Lipschitz constant for $L$
(and thus also for $\wt{L}$) and
$B$ is the bound on the kernel.
%For the last term, we also need to observe that
%$f_\eps \in B_{\sH}(M_n)$ for $n$ sufficiently large.
Note that since a different
loss is used for the
first and last terms, we use a union bound and thus introduce an
additional factor in the log term.

The third term equals $\wR_{L_{\wa}}(\cfhat) - \wR_{L_{\wa}}(\fhat)$ and
is nonpositive by definition of an $T$-clippable loss.

The middle (4th) term can be bounded (as in the preceding proof) by
$\lambda_n \| f_\eps \|^2$, which tends to
zero
as $n \to \infty$.
%This follows from the definition of $\fhat$, since $J(\fhat)
%\le J(f_\eps)$ implies $\wR_{L_{\wa}}(\fhat) - \wR_{L_{\wa}}(f_\eps) \le
%\lambda_n \| f_\eps \|^2 -
%\lambda_n \| \fhat \|^2 \le \lambda_n \| f_\eps \|^2$.

To bound the second term, observe that for any $f$,
\begin{align*}
\wR_{\wt{L}_{\a}}(f) - \wR_{\wt{L}_{\wa}}(f)
&= \frac1{n} \Bigg[ \sum_{i:\Yt_i=1}  (\wa - \a) \wt{L}(1,f(X_i)) \\
& \ \ \ \ \ \ \ \ +
\sum_{i:\Yt_i=0}  (\a - \wa) \wt{L}(0,f(X_i)) \Bigg] \\
&\le |\wa - \a| \sup_{x,\tilde y} \wt{L}(\tilde y,f(x)) \\
&\le |\wa - \a| (C_0+DT)\,.
%&\le |\wa - \a| \left( L_0 + D \| f \|_\infty \right),
\end{align*}
%where $D$ is the Lipschitz constant of $L$. By Cauchy-Schwarz and the
%reproducing property,
%$$
%\| f \|_\infty = \sup_x |\langle f, k(\cdot,x)\rangle| \le \|f\|_{\sH} B
%$$
%where $B$ is the bound on the kernel. Now $\|f\|_{\sH} \le
%\sqrt{\frac{L_0}{\lambda_n}}$, and so for the second term to go to zero, we need
%$|\wa - \a|/\lambda_n$ to go to zero. This is precisely the point in the argument
%where we need the rate of convergence for $\wa$. Under {\bf (A')} and {\bf
%(C')}, we
%know that $|\wa - \a|$ converges at a rate of $\sqrt{\frac{\log n}{n}}$, and by
%our assumption on the rate of decay of $\lambda_n$, $|\wa - \a|/\lambda_n$
%tends to zero as $n \to \infty$, except on a vanishingly small event.

The fifth term is handled in a similar manner, but with the non-clipped loss $L$
instead of $\wt{L}$; in this case we have
\[
\wR_{\wt{L}_{\a}}(f) - \wR_{\wt{L}_{\wa}}(f)
\le |\wa - \a| \left( C_0 + D \| f_\eps \|_\infty \right).
\]
%where again we observe that
%$f_\eps \in B_{\sH}(M_n)$ for $n$ sufficiently large.

In summary, we have shown that $R_{\Pt, L_{\a}}(\fhat) - R_{\Pt,L_{\a}}^* \le
\eps$ with probability tending to one as $n$ (and with it $n_0$ and $n_1$) tends
to infinity. This concludes the proof.

\bibliographystyle{plainnat}
\bibliography{labelNoiseTR}

\end{document}